\documentclass[aps,amsmath,pre,floatfix,superscriptaddress]{revtex4-2}
\usepackage{graphicx}
\usepackage{color}
\usepackage{geometry}
\usepackage{wrapfig}
\usepackage{comment}
\usepackage{amsmath,amsthm}
\usepackage[export]{adjustbox}
\usepackage{float}
\usepackage{amssymb}
\usepackage[utf8]{inputenc}
\usepackage{afterpage}
\usepackage{lipsum}  
\usepackage{appendix}
\usepackage{natbib}
\usepackage{subfigure}
\usepackage{subcaption}
\usepackage{amsmath}
\usepackage{setspace}
\newtheorem{theorem}{Theorem}[section]

\newtheorem{lemma}[theorem]{Lemma
}
\usepackage{tcolorbox}
\usepackage{booktabs}
\usepackage{multirow}
\tcbuselibrary{skins}
\tcbuselibrary{breakable}
\usepackage{dsfont}

\usepackage{hyperref}
\hypersetup{colorlinks=true, linktoc=all, linkcolor=blue, linktocpage, citecolor=blue}

\usepackage{geometry}\usepackage{geometry}

\usepackage[normalem]{ulem}
\graphicspath{{figs/}}

\usepackage[section]{placeins}

\begin{document}
\title{Trust or Check? \\ Understanding the (Evolutionary) Dynamics of User Trust in AI Systems
}



\author{Adeela Bashir$^{1, \dagger}$}
\author{Zhao Song$^{1, \dagger}$}
\author{Ndidi Bianca Ogbo$^{1, \dagger}$}
\author{Nataliya Balabanova$^{2, \dagger}$}
\author{J. Martin Smit$^{3, \dagger}$}
\author{Chin-wing Leung$^{4, \dagger}$} 

\author{Paolo Bova$^{1}$}
\author{Manuel Chica$^{5}$}
\author{Dhanushka Dissanayake$^{1}$}
\author{Manh Hong Duong$^{2}$}
\author{Elias Fernández Domingos$^{6,7}$}
\author{Nikita Huber-Kralj$^{1}$}
\author{Marcus Krellner$^{8}$}
\author{Andrew Powell$^{1}$}
\author{Stefan Sarkadi$^{9}$}
\author{Fernando P. Santos$^{3}$}
\author{Zia Ush Shamszaman$^{1}$}
\author{Chaimaa Tarzi$^{1}$}
\author{Paolo Turrini$^{4}$}
\author{Grace Ibukunoluwa Ufeoshi$^{1}$}
\author{Víctor A. Vargas-Pérez$^{5}$}

\author{Alessandro Di Stefano$^{1,\ddagger}$}
\author{Simon T. Powers$^{10,\ddagger}$}
\author{The Anh Han$^{1,\ddagger,*}$}



 \maketitle
	{\footnotesize
		\noindent
        $^{1}$ School Computing, Engineering and  Digital Technologies, Teesside University\\
        $^{2}$ School of Mathematics, University of Birmingham\\
        $^{3}$ Informatics Institute, University of Amsterdam \\
		$^{4}$ Department of Computer Science, University of Warwick\\
        $^{5}$ Department of Computer Science and Artificial Intelligence (DECSAI) and Andalusian Research Institute DaSCI
        ``Data Science and Computational Intelligence", University of Granada, Spain\\
        $^{6}$ Machine Learning Group, Universit\'e libre de Bruxelles\\ 
        $^{7}$ AI Lab, Vrije Universiteit Brussel\\
        $^{8}$University of Technology Dresden\\
        $^{9}$School of Engineering and Physical Sciences, University of Lincoln \\
	    $^{10}$ Division of Computing Science and Mathematics, University of Stirling\\     
\noindent $\dagger$: equal first author \\
$\ddagger$:  equal last author\\  
\noindent  $^\star$ Corresponding author: The Anh Han (T.Han@tees.ac.uk)
	}

  \maketitle

\section*{abstract}

  As the capabilities and adoption of Artificial Intelligence (AI) systems grow, trust in these AI systems is an increasingly urgent concern. Much research has focused on models of AI governance and has primarily examined incentives for safe development and effective regulation. Hence they typically represented users’ trust as a one-shot adoption choice rather than as a dynamic, evolving process shaped by repeated interactions. We instead model trust as the dynamic choice of reduced monitoring in a repeated, asymmetric interaction between users and AI developers, where checking developers' behaviour is costly. Using evolutionary game theory, we study how users' strategies of trust and developers' strategies of providing safe (compliant) or unsafe (non-compliant) AI co-evolve under different levels of monitoring cost and institutional regimes. 
  We conduct the analysis on both imitation-based and learning-based perspectives, with the stochastic finite-population dynamics, the infinite-population replicator analysis and the reinforcement learning analysis.
  We find three robust long-run regimes: no adoption by users while developers provide unsafe AI, unsafe but widely adopted systems, and safe systems that are widely adopted. Only the last is desirable, and it arises when penalties for unsafe behaviour exceed the extra cost of safety and users can still afford to monitor at least occasionally. Our results formally support governance proposals that emphasise transparency, low-cost monitoring, and meaningful sanctions, and they show that neither regulation alone nor blind user trust is sufficient to prevent the drift towards unsafe or low-adoption outcomes.

\vspace{3mm}

\noindent\textbf{Keywords:} AI governance, trust, game theory, replicator dynamics, reinforcement learning, trustworthy AI 
\newpage
\section{Introduction}


As AI systems become increasingly capable and deeply embedded across social, economic, and political domains \cite{sajadieh2026artificial,capraro2024impact,han2026socialphysicsageartificial}, questions about how much users should trust these systems, and under what institutional conditions such trust is warranted, are becoming central to AI governance. Current regulatory initiatives, such as the EU AI Act, explicitly aim to promote trustworthy AI by shaping incentives for safe development and by providing users with safeguards and information to calibrate their reliance on AI systems. Yet, much of the formal work in AI governance has focused on developers and regulators, typically treating user trust as a static, one-shot adoption decision rather than as a dynamic process shaped by repeated interactions, feedback, and evolving expectations. This motivates the need for models that explicitly capture the evolutionary dynamics of user trust and developer behaviour within different regulatory environments.

Existing work has examined how evolutionary game theory (EGT) models can formalise some of the dilemmas of AI governance and regulation~\cite{alalawi2024trust,FonsecaALIFE2025,balabanova_media_2025,buscemi2025llms,han2022voluntary,gao2026policy,zhang2025can}. This work has examined the incentives on AI creators towards safe and trustworthy development, and on regulators for effectively enforcing regulations. However, these models have not explicitly captured the dynamics of user trust -- how does the trust users place in an AI system change with different regulatory regimes? This is a pressing policy question, since AI regulations typically aim to help users place an appropriate amount of trust in AI systems (see, for example, the EU AI Act)~\cite{laux2024trustworthy}, such that they are not exploited by over-trusting the products of big tech, but also do not miss out on potential benefits through misplaced distrust. 
These EGT models of AI governance have not addressed this because they have focused on single-shot interactions between users and developers ~\cite{alalawi2024trust,FonsecaALIFE2025,balabanova_media_2025,buscemi2025llms,han2022voluntary,gao2026policy}. That is, a single game is played in which a user chooses whether or not to adopt the AI system from a particular developer, given the information they have about the state of the regulatory environment. This limits the ability to model trust, since trust is a dynamic and evolving process that unfolds through repeated interactions, shaped by experience, expectations, and observed behaviour~\cite{fouragnan2013neural}. Empirical and theoretical evidence suggests that even minimal exposure to prior interactions, such as previously observing cooperative or non-cooperative behaviour, can significantly influence subsequent trust and cooperation decisions, including in human-AI contexts~\cite{neto2025cooperation}. 

Within game-theoretic frameworks, trust has long been studied as a key mechanism to facilitate cooperation and coordination in environments where agents face the risk of exploitation. However, traditional models often conflate trust with the cooperative act itself~\cite{james_jr_trust_2002,yamagishi_separating_2005,banu_how_2024}. This includes the canonical Trust Game~\cite{Berg1995TrustHistory}, which is effectively a one-sided Prisoner's Dilemma~\cite{buskens_social_1998}. Similarly, in previous EGT models of AI Governance, trust has been conflated with a user choosing to adopt an AI system~\cite{alalawi2024trust,balabanova_media_2025}. In reality, trust is a factor that \emph{influences} cooperation or user adoption of AI.

This raises the question of how trust can be defined and measured in a game-theoretic framework. Here, we build on recent work that equates trust with a reduced frequency of monitoring a partner's actions~\cite{perret2026disentangling}. This captures the idea that trust is a heuristic that individuals use when monitoring -- getting information about a partner's actions -- is costly. This accords with theories from social science that view the function of trust as reducing the complexity of social interactions, by acting as a decision-making shortcut~\cite{Luhmann:1979:a}. Trust as reduced monitoring has recently been argued to be a useful, pragmatic, definition of trust between users and AI~\cite{ferrario_ai_2020,han2021or,ferrario_trust_2021,ferrario_being_2025,zahedi_game-theoretic_2025,loi_how_2023}. Furthermore, this definition of trust is measurable in both human and artificial agents, since it relies only on observing whether one agent monitors another, e.g. does a user verify the output from a Large Language Model (LLM), or do they check on how a particular system has been audited by regulators? However, trust-as-reduced monitoring has so far only been used to formulate symmetric games, such as repeated social dilemmas with monitoring costs~\cite{han2021or,perret2026disentangling}.

With this definition of trust, we build an EGT model to analyse an asymmetric repeated game between users of AI systems and their creators. As AI creators co-evolve their own strategies to maximise the value they extract from their users, we use this model to explain what the consequences are of different trust strategies by users on the overall risks AI systems pose to them.
We show that the cost of monitoring is a key parameter driving safe and trustworthy development by creators and appropriate adoption of these systems by users. This provides formal support for the verbal arguments commonly found in the AI governance literature that transparency of AI systems and their development, i.e. reduced monitoring costs, is important to incentivise trustworthy development by creators and to allow users to calibrate their trust in AI appropriately. 
In particular, we show that it is essential that users do not fully trust AI systems or their creators to avoid risk externalities. Otherwise, evolutionary pressure driven by incentives on users and creators entails that AI companies can shift increasing risks onto their users. To prevent this, policymakers should ensure that the cost of monitoring new AI systems for their safety remains low.

These findings are further corroborated by the learning-based study. We consider Reinforcement Learning (RL), which has become an important paradigm for studying the emergence of prosocial behaviours among autonomous learning agents~\cite{sandholm1996multiagent,leung2024learning,dasgupta2025investigating,smit2024,hammond2025multiagentrisksadvancedai}. Breakthroughs have been achieved in scenarios involving the exploitation of common resources, particularly through the development of trust mechanisms among agents~\cite{PerolatLZBTG17,leibo_multiagent_AAMAS,LeungTSM26}.
Contrary to the imitation-based approach, where agents' strategic updates depend on others' strategies, RL agents update their policies (the RL analogue of strategies) through trial-and-error based on their own experience.
The RL dynamics are shown to be characterized by the selection-mutation model \cite{tuyls2003selection,leung2023stochastic} under the multi-agent settings, thus offering a complementary framework where agents learn optimal strategies through exploration and exploitation.
This allows for a more realistic modeling of trust as an endogenous and experience-driven phenomenon~\cite{zheng2024decoding,LeungLT24}.
In addition to the role of the monitoring costs, we identify that the co-existence of multiple user strategies is a key to a robust, trustworthy society.

\section{Models and Methods}
\label{sec:modelandmethods}
We first describe our model of a repeated two-player game between users and developers, allowing for adaptive 
trust-based user strategies and strategic choices by developers. We then specify the evolutionary
dynamics analysed in both infinite and finite populations. Finally, we provide details on Reinforcement Learning approach. 

\subsection{Repeated user-creator game with trust-based strategies }
\label{subsec:repeated-two-player}

We start by constructing a model of \emph{repeated} interactions between AI users and developers.
In real-world AI deployment, users interact with the same  AI creator or system across many episodes: they repeatedly query a model, integrate it
into workflows, and update their beliefs about its reliability based on past performance. 

To formalise these ideas, we consider two populations representing users and creators (developers). Within each population, individuals may adopt different strategies (see Table~\ref{tab:role_explanation}). Users choose among five strategies that differ in when they adopt the technology and how often they monitor the AI system’s
outputs. Those include three strategies that condition future adoption and monitoring on observed behaviour of the by creators (e.g., tit-for-tat-like or threshold-based trust heuristics). Creators, in turn, can either cooperate by
developing safe (compliant) AI systems (C), incurring an additional cost, or defect by creating unsafe
(non-compliant) systems (D), avoiding this cost but potentially exposing users to harm.

We do not explicitly model regulators as an evolving actor in this model. However, the presence of regulations is reflected in the payoff structure, such that we treat regulations as an institution -- part of the rules of the game~\cite{north1990institutions}. Specifically, creators suffer an institutional punishment for releasing unsafe systems. These regulatory choices shape the effective incentives faced by creators in the repeated game with users: stronger punishment for unsafe behaviour makes cooperation (safe development) more attractive in the long run, while weaker enforcement allows unsafe strategies to persist or even dominate. In this way, the repeated user–creator game becomes a basic building block for analysing how user trust and developer behaviour co-evolve under different regulatory regimes.


\begin{table}[htb]
\centering
\caption{Roles, actions, and behavioural interpretation in the AI regulatory ecosystem.}
\begin{tabular}{@{}llp{11cm}@{}}
\toprule
Role                   & Strategy & Description                                                                                                      \\ \midrule
Users & \textbf{AllA}    & Always adopt the technology.\\
                       & \textbf{AllN}    & Never adopt the technology.                                                                                      \\
                       & \textbf{TFT}     & Initially adopt the technology and subsequently condition the action on the outcome of the previous interaction. \\
 &
  \textbf{TUA} &
  Play \textbf{TFT} until observing $\theta_T$ consecutive rounds of cooperation, then switch to unconditional cooperation and only monitor with probability $p$. If defection is observed while monitoring in this phase, revert to the original $\theta_T$-thresholded \textbf{TFT} strategy. \\
 &
  \textbf{DtG} &
  Play \textbf{TFT} until observing $\theta_D$ consecutive rounds of defection, then switch to unconditional defection and only monitor with probability $p$. If cooperation is observed while monitoring in this phase, revert to the original $\theta_D$-thresholded \textbf{TFT} strategy. \\ \midrule
Creators             & \textbf{C}       & Produces a \textbf{safe} (compliant) technology, incurring a cost but ensuring reliability.                      \\
                       & \textbf{D}       & Produces an \textbf{unsafe} (non-compliant) technology, avoiding costs but risking negative outcomes.            \\ \bottomrule
\end{tabular}

\label{tab:role_explanation}
\end{table}

\begin{table}[htb]
    \caption{Explanation of the key parameters of the models.}
    \centering
    \begin{tabular}{@{}cl@{}}
    \toprule
        Parameter  & Explanation                                                   \\ \midrule
        $b_{\textrm{u}}$    & Benefit a user receives when adopting a safe technology       \\[1mm]
        $b_{\textrm{c}}$    & Benefit a creator receives when their technology is adopted \\[1mm]
        $c$      & Cost of the creator for cooperating                         \\[1mm]
        $v$        & Institutional punishment cost incurred by defecting creators             \\[1mm]
        $\mu$      & Risk of adopting an unsafe AI, $\mu\in(-\infty, 1]$           \\[1mm]
        $\epsilon$ & Cost of monitoring                                              \\[1mm]
        $p_T$    & Probability of checking for TUA strategists                   \\[1mm]
        $p_D$      & Probability of checking for DtG strategists                   \\[1mm]
        $\theta_T$ & Threshold for TUA                                             \\[1mm]
        $\theta_D$ & Threshold for DtG                                             \\[1mm]
        $r$        & Number of rounds                                              \\ \bottomrule
    \end{tabular}
    \label{tab:parameters}
\end{table}

\begin{figure*}
\begin{center}
\includegraphics[width=0.8\textwidth]{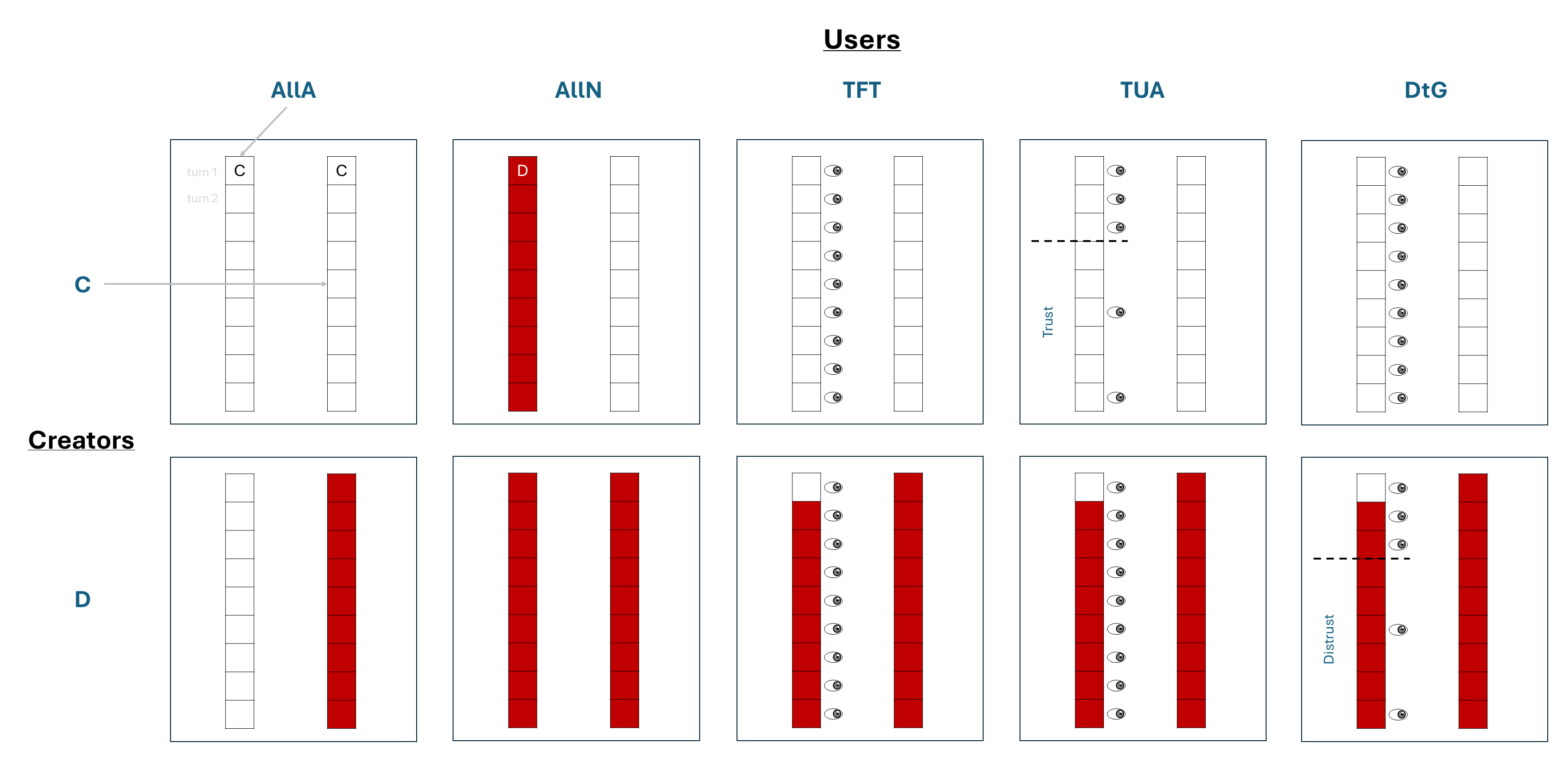}
\caption{\textbf{Interaction Sequences between Strategies.} Each block represents an action of the user (left stack) and the developer (right stack), which can be cooperate (white) or defect (dark red). Users may also monitor the creator's  behaviour, paying a cost (symbols to the right of the stacks of TFT, TUA and DtG). The figure illustrates the differences between the conditional strategies: While TFT always observes, TUA may enter a state of trust after observing that the creator cooperated for $\theta_T$ subsequent rounds  (in this example $\theta_T = 3$), whereas DtG may enter a state of distrust after observing $\theta_D$ defections. In those states, observation only happens with probability $p_T$ and $p_D$ respectively.}
\label{fig:model}
\end{center}
\end{figure*}

\begin{table}[h!]
 \caption{\textbf{Payoff matrix}. Each row specifies an action profile of the users, and the columns contain the user and creator payoffs for the two creator strategies \textbf{C} and \textbf{D}. See Table \ref{tab:role_explanation} for detailed explanation of the actions.}
    \centering

    \begin{tabular}{@{}ccccc@{}}
    \toprule
    \multirow{2}{*}{User Strategy} & \multicolumn{4}{c}{Creator strategy}                          \\ \cmidrule(l){2-5} 
                                   & \multicolumn{2}{c}{\textbf{C}} & \multicolumn{2}{c}{\textbf{D}} \\ 
    \multicolumn{1}{l}{} &
      User Payoff &
      Creator Payoff &
      User Payoff &
      Creator Payoff \\ \midrule
    \textbf{AllA}                  & $b_{\textrm{u}}$       & $b_{\textrm{c}} - c$      & $\mu \,b_{\textrm{u}}$       & $b_{\textrm{c}}-v$       \\[2mm]
    \textbf{AllN}                  & $0$         & $-c$           & $0$                 & $0$       \\[2mm]
    \textbf{TFT} & $b_{\textrm{u}} - \epsilon$ & $b_{\textrm{c}} - c$ & $\dfrac{\mu\,b_{\textrm{u}}}{r} - \epsilon$ & $\dfrac{b_{\textrm{c}} - v}{r}$ \\[2mm]
    \textbf{TUA} &$b_{\textrm{u}}-\dfrac{\theta_T\,\epsilon+(r-\theta_T)\,p_T\,\epsilon}{r}$ & $b_{\textrm{c}} - c$ &$\dfrac{\mu\,b_{\textrm{u}}}{r}-\epsilon$ & $\dfrac{b_{\textrm{c}}-v}{r}$ \\[2mm]
    \textbf{DtG} &
      $b_{\textrm{u}}-\epsilon$ &
      $b_{\textrm{c}} - c$ &
      $\dfrac{\mu\,b_{\textrm{u}}}{r}$ $-\dfrac{\theta_D\,\epsilon+(r-\theta_D)\,p_D\,\epsilon}{r}$ & $\dfrac{b_{\textrm{c}}-v}{r}$
       \\[2mm] \bottomrule
    \end{tabular}
    \label{tab: Model I}
    \end{table}


The individual payoff earned in any one round of the repeated game depends on the strategies of the participating individuals. In our setting, a game is played between a single user and a single creator (developer), and their payoffs are given by the matrix in Table~\ref{tab: Model I}. Rows correspond to the user strategy, while columns correspond to the creator strategy. Each entry shows the payoffs for the user on the left and for the creator on the right. 
When the creator cooperates ($C$), a user who adopts the AI system obtains a benefit $b_U$ from using a safe technology. Depending on the user’s strategy, this benefit may be reduced by the cumulative cost of monitoring the system’s outputs. Monitoring is costly: each check incurs a cost $\epsilon > 0$, and different user strategies involve different monitoring frequencies across the $r$ rounds of interaction. For example, a user who always adopts (AllA) never monitors and therefore receives the full benefit $b_U$ whenever the creator cooperates. By contrast, a tit-for-tat (TFT) user starts by adopting and monitoring, incurring a monitoring cost $\epsilon$ in every round, and then conditions future actions on observed past behaviour, leading to the effective payoff $b_U - \epsilon$ when paired with a cooperative creator. Threshold-based trust strategies such as TUA and DtG further modulate the expected monitoring cost over $r$ rounds by switching between constant and low-probability monitoring, after observing sufficiently long sequences of cooperation or defection. This results in the averaged monitoring terms proportional to $\theta_T$, $\theta_D$, $p_T$, and $p_D$ in Table~\ref{tab:role_explanation}.

When the creator defects ($D$), an adopting user faces an unsafe AI system. In this case, the user’s benefit is scaled by a risk factor $\mu \in (-\infty,1]$, so that adoption yields $\mu b_U$ instead of $b_U$. Negative values of $\mu$ capture scenarios where unsafe AI leads to net harm. As before, users who monitor creator’s past behaviours also incur the relevant checking costs, which are averaged over the $r$ rounds according to their strategy. Users who never adopt (AllN) receive a payoff of $0$ regardless of the creator’s behaviour, since they neither benefit from safe AI nor suffer from unsafe AI.

Creators receive a benefit $b_c$ whenever their technology is adopted by the user, for instance through sales or usage-based revenue. Producing safe AI (cooperating) carries an additional development cost $c$, so that a cooperative creator’s payoff is $b_c - c$ whenever the user adopts. Producing unsafe AI (defecting) avoids this cost but may lead to institutional punishment of size $v$ (e.g., through regulatory fines or liability). In the payoff matrix, this is reflected by the payoffs $b_c - v$ for defecting creators in adoption states. The user’s strategy affects the creator only indirectly, through its impact on whether the technology is adopted and on how often unsafe behaviour is detected (via monitoring and punishment), but in this reduced model there are no explicit regulator or commentator players; their effects are subsumed into the parameters $b_c$, $c$, and $v$.


In summary, Table~\ref{tab: Model I} summarises how combinations of user trust strategies and creator safety choices determine (i) the user’s expected benefit from adoption, adjusted for monitoring costs and risk from unsafe AI, and (ii) the creator’s net gain from safe versus unsafe development under the possibility of punishment.
\subsection{Evolutionary dynamics:  finite and infinite population perspectives}

\subsubsection{Stochastic dynamics for finite populations}
\label{subsection:finitepopulation}
\paragraph{Payoff calculation.} We consider two different well-mixed populations of users and creators of sizes  $N_u$ and $N_c$, respectively. 
Let $x$, $y$, $z$, $w$, and $1-x-y-z-w$ be respectively the fractions of users that adopting strategy $AllA$, $AllN$, $TFT$, $TUA$, and $DtG$, and let $\alpha$ and $1-\alpha$ be respectively the fractions of creators  adopting strategy $C$ and $D$. 
The fitness (i.e. average payoff) is then computed using the payoff matrix constructed in the models (see Tables~\ref{tab: Model I}). 
The calculation of the fitness is a key step in applying evolutionary game theory, which translates a  game-theoretical concept (the payoff) to a biological or cultural one (the fitness). Here, we assume the cultural process of payoff-biased social learning, in which individuals copy successful strategies used by other individuals in their population. Therefore, the fitness that a user and creator obtain in each game is respectively given by:
\begin{align*}
& f^{U}_{X\in\{AllA, AllN, TFT, TUA, DtG\}}=\alpha P^{U}_{X,C}+(1-\alpha) P^{U}_{X, D}, \\
& f^C_{Y\in\{C,D\}} = x P^{C}_{AllA, Y} +  y P^{C}_{AllN, Y}  +  z P^{C}_{TFT, Y}  +  w P^{C}_{TUA, Y}  +  (1-x-y-z-w) P^{C}_{DtG, Y}.
\end{align*}
In each of the above formulas, each term on the right-hand side is the (average) payoff  that the focal player obtains when interacting with the specific opponent encoded in the subscripts.  More precisely, we have
\begin{equation*}
        \begin{split}
        \label{eq: user payoff model one}
            f^U_{AllA} &= \alpha b_{\textrm{u}} + (1-\alpha)\mu b_{\textrm{u}}=b_{\textrm{u}} (\alpha  (-\mu )+\alpha +\mu ),\\
            f^U_{AllN} &= \alpha \cdot 0 + (1-\alpha)\cdot 0=0,\\
            f^U_{TFT}&= \alpha(b_{\textrm{u}}-\epsilon) + (1-\alpha)\left(\frac{\mu b_{\textrm{u}}}{r} - \epsilon\right)= \frac{b_{\textrm{u}} (-\alpha  \mu +\mu +\alpha  r)}{r}-\epsilon,\\
            f^U_{TUA}& = \alpha\left(b_{\textrm{u}} - \frac{\theta_T\epsilon + (r-\theta_T)p_T\epsilon}{r}\right) + (1-\alpha)\left(\frac{\mu b_{\textrm{u}}}{r} - \epsilon\right)\\
            &=\frac{b_{\textrm{u}} (-\alpha  \mu +\mu +\alpha  r)+\alpha  \theta_T (p_T-1) \epsilon +r \epsilon  (\alpha +\alpha  (-p_T)-1)}{r},\\
            f^U_{DtG}& = \alpha(b_{\textrm{u}}-\epsilon) + (1-\alpha)\left(\frac{\mu b_{\textrm{u}}}{r} - \frac{\epsilon\theta_D + (r-\theta_D)\epsilon p_D}{r}\right)\\&=\alpha  (b_{\textrm{u}}-\epsilon )+\frac{(\alpha -1) (-b_{\textrm{u}} \mu + p_D \epsilon  (r-\theta_D)+\theta_D \epsilon )}{r}.
        \end{split}
    \end{equation*}
Similarly, for the creator, we obtain 
    \begin{equation*}
    \begin{split}
        \label{eq: creator payoff model one}
        f^{C}_C& = x(b_c-c) + y(-c) + z(b_c-c) + w(b_c-c) + (1-x-y-z-w)(b_c-c)\\
        &=(b_c-c) - y b_c,\\
        f^{C}_D&= x(b_c-v) + y*0 + z\left(\frac{b_c-v}{r}\right) + w\left(\frac{b_c-v}{r}\right) + (1-x-y-z-w)\left(\frac{b_c-v}{r}\right)\\
        &= x(b_c-v) + (1-x-y)\left(\frac{b_c-v}{r}\right).
        \end{split}
    \end{equation*}
Now we  calculate explicitly the difference of fitness between two strategies in users:
\begin{align*}
\Delta f^U_{\text{AllA,AllN}}
&=
f^U_{\text{AllA}} - f^U_{\text{AllN}}
=
\alpha b_{\textrm{u}} + (1-\alpha)\mu b_{\textrm{u}},
\\[6pt]
\Delta f^U_{\text{AllA,TFT}}
&=
f^U_{\text{AllA}} - f^U_{\text{TFT}}
=
\alpha\epsilon
+
(1-\alpha)\left(\mu b_{\textrm{u}} - \frac{\mu b_{\textrm{u}}}{r} + \epsilon\right),
\\[6pt]
\Delta f^U_{\text{AllA,TUA}}
&=
f^U_{\text{AllA}} - f^U_{\text{TUA}}
=
\alpha\left(\frac{\theta_T\epsilon + (r-\theta_T)p_T\epsilon}{r} \right)
+
(1-\alpha)\left(\mu b_{\textrm{u}} - \frac{\mu b_{\textrm{u}}}{r}+ \epsilon \right),
\\[6pt]
\Delta f^U_{\text{AllA,DtG}}
&=
f^U_{\text{AllA}} - f^U_{\text{DtG}}
=
\alpha\epsilon
+
(1-\alpha)\left(
\mu b_{\textrm{u}}
-
\frac{\mu b_{\textrm{u}}}{r}
+
\frac{\theta_D\epsilon + (r-\theta_D)p_D\epsilon}{r}
\right),
\\[6pt]
\Delta f^U_{\text{AllN,TFT}}
&=
f^U_{\text{AllN}} - f^U_{\text{TFT}}
=
-\alpha(b_{\textrm{u}}-\epsilon)
-
(1-\alpha)\left(\frac{\mu b_{\textrm{u}}}{r}-\epsilon\right),
\\[6pt]
\Delta f^U_{\text{AllN,TUA}}
&=
f^U_{\text{AllN}} - f^U_{\text{TUA}}
=
-\alpha\left(b_{\textrm{u}} - \frac{\theta_T\epsilon + (r-\theta_T)p_T\epsilon}{r}\right)
-
(1-\alpha)(\frac{\mu b_{\textrm{u}}}{r}-  \epsilon),
\\[6pt]
\Delta f^U_{\text{AllN,DtG}}
&=
f^U_{\text{AllN}} - f^U_{\text{DtG}}
-\alpha(b_{\textrm{u}}-\epsilon)
-
(1-\alpha)\left(\frac{\mu b_{\textrm{u}}}{r} - \frac{\theta_D\epsilon + (r-\theta_D)p_D\epsilon}{r}\right),
\\[6pt]
\Delta f^U_{\text{TFT,TUA}}
&=
f^U_{\text{TFT}} - f^U_{\text{TUA}}
\alpha\frac{\theta_T\epsilon + (r-\theta_T)p_T\epsilon}{r},
\\[6pt]
\Delta f^U_{\text{TFT,DtG}}
&=
f^U_{\text{TFT}} - f^U_{\text{DtG}}
=
(1-\alpha)\left(
-\epsilon + \frac{\theta_D\epsilon + (r-\theta_D)p_D\epsilon}{r}
\right),
\\[6pt]
\Delta f^U_{\text{TUA,DtG}}
&=
f^U_{\text{TUA}} - f^U_{\text{DtG}}
=
\alpha\left(
\epsilon
-
\frac{\theta_T\epsilon + (r-\theta_T)p_T\epsilon}{r}
\right)
+
(1-\alpha)\left(
\frac{\theta_D\epsilon + (r-\theta_D)p_D\epsilon}{r}
-\epsilon
\right)
\end{align*}
Similarly, the difference of the fitness between two strategies in creators is:
\begin{equation*}
\Delta f^C_{\text{C,D} }=f^C_{\text{C}} - f^C_{\text{D}}=
\Big(b_c(1-y) - c\Big)
-
\left[
x(b_c - v) + (1-x-y)\frac{b_c - v}{r}
\right].
\end{equation*}
Using the above expressions, we calculate the average fitnesses for the users, which will be used to formulate the replicator dynamics for the infinite population model in Section \ref{sec:infinite pop}:
    \begin{equation*}
        \begin{split}
            \label{eq: average fitness for users model 1}
            \overline{f^U} & = \frac{(\alpha -1) b_{\textrm{u}} \mu  (-r x+x+y-1)}{r}-\alpha  b_{\textrm{u}} (y-1)\\&\qquad+\frac{1}{r}\Bigg(\epsilon  (-r (\alpha +\alpha  (p_T-2) w+w-\alpha  (x+y+z)+z)+\alpha  \theta_T (p_T-1) w\\&\qquad\qquad-(\alpha -1) p_D (r-\theta_D) (w+x+y+z-1)-(\alpha -1) \theta_D (w+x+y+z-1))\Bigg).
        \end{split}
    \end{equation*}
\paragraph{Evolutionary dynamics.} For a finite population setting, at each time step, a randomly selected individual A, with fitness $f_A$, may adopt a different strategy by imitating a randomly chosen individual B from the same population (with fitness $f_B$) with probability given by the Fermi distribution~\cite{traulsen2006}.
\[
p=[1+e^{-\beta(f_B-f_A)}]^{-1},
\]
where $\beta\geq 0$ is the strength of selection. $\beta = 0$ corresponds to neutral drift where imitation decisions are random, while for large $\beta \rightarrow \infty$, the imitation decision becomes increasingly deterministic.

In the absence of mutations or exploration, the end states of evolution are inevitably monomorphic (only one strategy remains), because once such a state is reached, it cannot be escaped through imitation. Only for our model, there are two populations (users and creators). To enable these populations to escape their monomorphic states, we further assume that with a certain mutation probability, an agent switches randomly to a different strategy without imitating another agent.  If this probability is small enough (limit of rare mutations), the dynamics will proceed with, at most, two strategies in the population, such that the dynamics can be conveniently described by a Markov chain, where each state represents a monomorphic population, while the transition probabilities are given by the fixation probability of a single mutant~\citep{key:imhof2005,key:novaknature2004,domingos2023egttools}. The resulting Markov chain has a stationary distribution, which characterises the average time the population spends in each of these monomorphic end states. This is a quantitative measure for the evolution of the behaviours. If the population spends most time in states with defecting creators then we expect creators to defect in this scenario.

Now, the probability to change the number $k$ of agents using strategy A by $\pm$ one in each time step can be written as ($Z$ is the population size)~\citep{traulsen2006}:
\begin{equation} 
T^{\pm}(k) = \frac{Z-k}{Z} \frac{k}{Z} \left[1 + e^{\mp\beta[f_A(k) - f_B(k)]}\right]^{-1}.
\end{equation}
The fixation probability of a single mutant with a strategy A in a population of $(Z-1)$ agents using B is given by~\citep{traulsen2006,key:novaknature2004}:
\begin{equation} 
\label{eq:fixprob} 
\rho_{B,A} = \left(1 + \sum_{i = 1}^{Z-1} \prod_{j = 1}^i \frac{T^-(j)}{T^+(j)}\right)^{-1}.
\end{equation} 

The transition matrix $\Lambda$ corresponding to the set of $\left\{1,\ldots ,s\right\}$ strategies is given by:
\begin{eqnarray}\label{eq:2.6}
\Lambda_{ij,j\neq i}=\frac{\rho_{ji}}{2(n-1)}\hspace{1mm} \text{ and } \hspace{1mm} \Lambda_{ii}= 1- \sum_{j=1,j\neq i}^s \Lambda_{ij}.
\end{eqnarray}
  Fixation probability $\rho_{ij}$ denotes the likelihood that a population transitions from a state $i$ to a different state $j$ when a mutant of one of the populations adopts an alternate strategy \textit{n} ($n=5$ for the user population, while $n=2$ for the creator population). The fixation probability is divided by the number of populations (which is 2) representing the interaction of two players at a time~\cite{encarnaccao2016paradigm,alalawi2019pathways}. 
  
Additionally, we define the frequency of cooperation as $\sum_{j=C,i\in \{AllA, AllN, TFT, TUA, DtG\}} \Lambda_{ij}$, and the level of adoption as the frequency when users adopt. 

\begin{table}[H]
\centering
\small
\setlength{\tabcolsep}{8pt}
\renewcommand{\arraystretch}{1.2}
\caption{Risk-dominant conditions for creator and user strategies.}
\label{tab:risk_dom_combined}
\begin{tabular}{c l l}
\toprule
\textbf{\#} & \textbf{Risk-dominant transition} & \textbf{Condition}\\
\midrule

\multicolumn{3}{c}{\textbf{Creator strategy dominance}}\\
\midrule

1 & $(\text{AllA},C)\ \rightarrow\ (\text{AllA},D)$  
& $c > v$ \\

2 & $(\text{AllN},C)\ \rightarrow\ (\text{AllN},D)$  
& $c > 0$ \\

3 & $(\text{TFT},C)\ \rightarrow\ (\text{TFT},D)$  
& $b_c(1 - r)+rc > v$ \\

4 & $(\text{TUA},C)\ \rightarrow\ (\text{TUA},D)$  
& $b_c(1 - r)+rc > v$ \\

5 & $(\text{DtG},C)\ \rightarrow\ (\text{DtG},D)$  
& $b_c(1 - r)+rc > v$ \\

\midrule
\multicolumn{3}{c}{\textbf{User strategy dominance}}\\
\midrule

6 & $(\text{AllA},C)\ \rightarrow\ (\text{AllN},C)$  
& $b_u > 0$ \\

7 & $(\text{TFT},C)\ \rightarrow\ (\text{AllA},C)$  
& $\epsilon > 0$ \\

8 & $(\text{TFT},C)\ \rightarrow\ (\text{TUA},C)$  
& $\epsilon \big[ r - (r - \theta_T)p_T \big] < \theta_T c$ \\

9 & $(\text{AllA},D)\ \rightarrow\ (\text{AllN},D)$  
& $\mu b_u > 0$ \\

10 & $(\text{AllA},D)\ \rightarrow\ (\text{TFT},D)$  
& $r\epsilon > \mu b_u (1 - r)$ \\

11 & $(\text{TFT},D)\ \rightarrow\ (\text{DtG},D)$  
& $\epsilon \big[ r - (r - \theta_D)p_D \big] < \theta_D c$ \\

\bottomrule
\end{tabular}
\end{table}

\begin{figure}[h]
    \centering
    \includegraphics[scale=0.45]{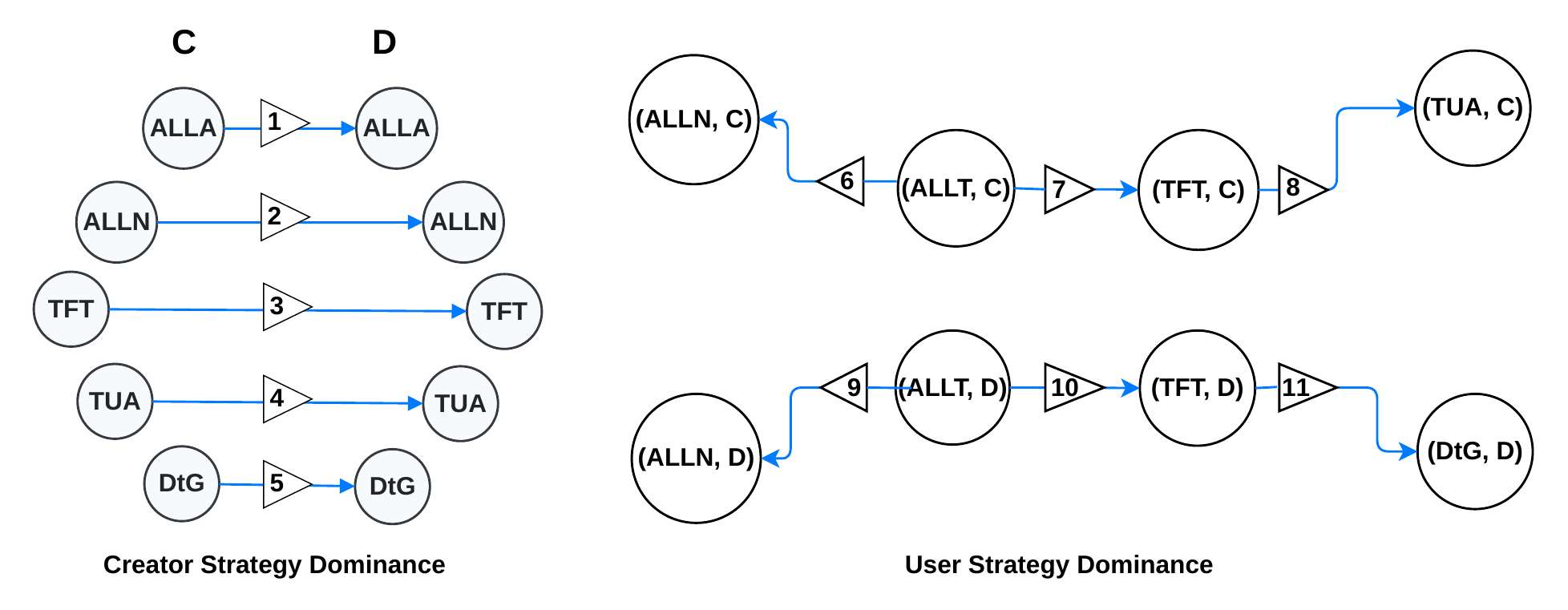}

\caption{\textbf{Graphical representation of the risk-dominant conditions in Table \ref{tab:risk_dom_combined}.} The left panel illustrates creator strategy dominance, where changes occur from cooperation $(C)$ to defection $(D)$. The right panel shows user strategy dominance, highlighting sequential transitions between user strategies as the associated risk-dominant inequalities are satisfied. The numbered arrows correspond directly to the transition conditions listed in Table \ref{tab:risk_dom_combined}.}
    \label{fig:risk}
\end{figure}

\paragraph{Risk-dominant conditions for pairwise strategy comparison.}
Table \ref{tab:risk_dom_combined} and Figure \ref{fig:risk} summarise the risk-dominant conditions, i.e. when one strategy is more likely than its alternative to spread under uncertainty in a pairwise comparison \cite{nowak2006five}. These conditions clarify how incentives shape behaviour in the AI regulatory ecosystem. On the creator side, defection is risk-dominant whenever the safety cost ($c$) exceeds the expected punishment ($v$); conversely, cooperation becomes risk-dominant only when ($v \gtrsim c$), underscoring the importance of sufficiently strong enforcement to sustain safe AI development. In repeated interactions, the term $(b_c(1-r) + rc = b_c - r(b_c - c))$ shows that longer interaction horizons (larger $r$) magnify the long-run payoff gap between safe and unsafe behaviour: without penalties that scale accordingly, unsafe strategies remain attractive.

On the user side, several conditions (e.g., $b_u>0$, $\epsilon>0$) are trivially satisfied, indicating that users naturally prefer beneficial adoption (i.e. whenever $\mu > 0$) and reduced monitoring. However, non-trivial conditions involving $\epsilon$, $\theta_T$, $\theta_D$, and checking probabilities ($p_T$, $p_D$) govern the transition to more sophisticated strategies such as TUA and DtG. They capture a central trade-off: higher monitoring costs discourage vigilance, whereas stricter trust/distrust thresholds and more frequent checks increase accountability but at additional cost.

Taken together, these pairwise risk-dominance conditions \cite{nowak2006five} show that, although users are inclined to adopt beneficial and especially safe technologies, effective regulation must jointly calibrate monitoring costs and sanctions. Only when monitoring remains affordable and penalties for unsafe behaviour are meaningful do creators have robust incentives to cooperate, thereby limiting systemic risks from unsafe AI. Since risk dominance considers only pairwise strategy comparisons in isolation, we complement this local analysis with our finite- and infinite-population dynamics, which reveal how these incentives play out when all strategies co-evolve simultaneously.

\subsubsection{Population dynamics for infinite populations: The multi-population replicator dynamics}
\label{subsection:infinitepopulation}
In this section, we recall the framework of the replicator dynamics for multi-populations~\cite{taylor1979evolutionarily,bauer2019stabilization,bashir2026co}. To describe the dynamics, we consider a set of $m$ different populations ($m$ is some positive integer), which are infinitely large and well-mixed. Each population $i$, $i=1,\ldots m$, consists of $n_i$ ($n_i$ is some positive integer) different strategies (types). Let $x_{ij}, 1\leq i\leq m, 1\leq j\leq n_i$, be the frequency of the strategy $j$ in the population $i$. We denote by $x_i=(x_{ij})_{j=1}^{n_i}$, which is the collection of all strategies in the population $i$, and $x=(x_1,\ldots, x_m)$, which is the collection of all strategies in all populations. 

For each $i\in\{1,\ldots, m\}$ and $j\in\{1,\ldots, n_i\}$, let $f_{ij}(x)$ be the fitness (reproductive rate) of the strategy $j$ in the population $i$. This fitness is obtained when the strategy $j$ interacts with all other strategies in all populations; thus, it depends on all the strategies in the populations. The average fitness of the population $i$ is defined by
\[
\bar{f}_i(x)=\sum_{j=1}^{n_i} x_{ij} f_{ij}(x).
\]
The multi-population replicator dynamics is then given by
\begin{equation}
\label{eq: general replicator dynamics}
\dot{x}_{ij}=x_{ij} (f_{ij}(x)-\bar{f}_{i}(x)), \quad 1\leq i\leq m,\quad 1\leq j\leq n_i. 
\end{equation}
This is in general an ODE system of $\sum_{i=1}^m n_i$ equations. Noting, however that since $\sum_{j=1}^{n_i}x_{ij}=1$ for all $i=1,\ldots, m$, we can reduce the above system to a system of $\sum_{i=1}^m n_i-m$ equations.


In the subsequent sections, we employ \eqref{eq: general replicator dynamics} to our models. 



\section{Finite population analysis}

We now examine evolutionary game dynamics in finite populations (see Methods, Section \ref{subsection:finitepopulation}). In contrast to the pairwise risk-dominance conditions in Section\ref{subsection:finitepopulation}—which identify which strategy is favoured against a single alternative under uncertainty—finite populations are strongly influenced by stochastic effects, including errors in social learning, which can substantially alter evolutionary outcomes \citep{nowak2004emergence,rand2013evolution,zisis2015generosity}. As a result, strategies that are risk-dominant in the pairwise sense may still be temporarily displaced, lower-payoff strategies may spread by chance, and higher-payoff strategies may go extinct. This stochastic framework has proven powerful in explaining empirical patterns in human behaviour~\cite{rand2013evolution,zisis2015generosity}, and it allows us to test how the local incentives captured by the risk-dominance inequalities translate into long-run population states when all strategies co-exist.

\begin{figure}[tb]
    \centering
    \includegraphics[scale=0.65]{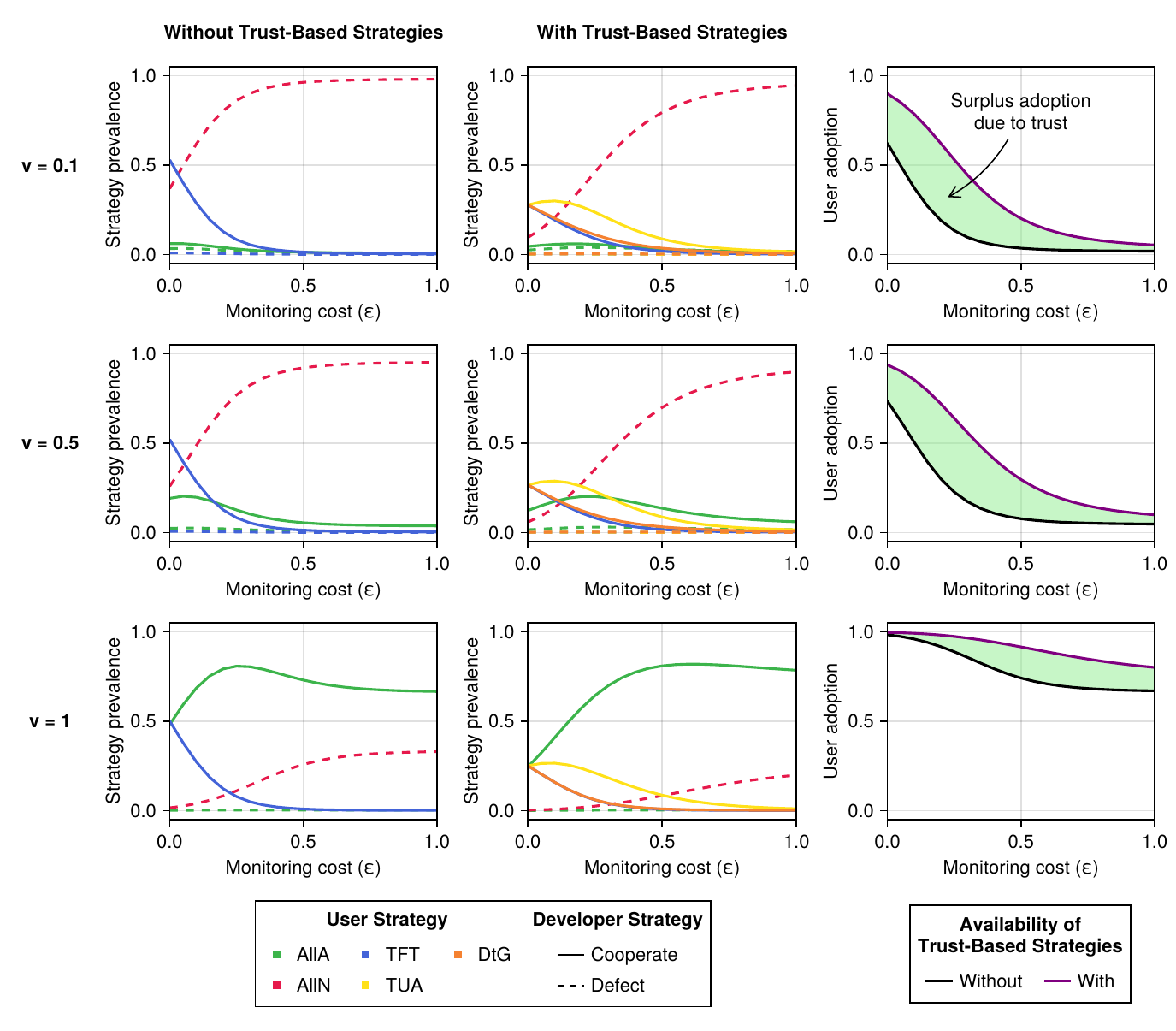}
    \caption{\textbf{Trust-based strategies enhance user adoption, while it declines as monitoring cost becomes expensive.} 
    The first and second columns show the stationary distributions of each state as a function of monitoring cost for scenarios without and with trust-based strategies, respectively. The third column displays the difference in user adoption levels between these two cases across varying monitoring costs. Rows from top to bottom correspond to increasing levels of institutional punishment ($v=0.1$, $0.5$, and $1$). Parameters are set to $b_u=b_c=4$, $\beta=0.1$, $Z_u=Z_c=100$, $c=0.5$, $\mu=-0.2$, $r=10$, $\theta_T=\theta_D=3$, and $p_T=p_D=0.25$.}
    \label{fig:finite1}
\end{figure}

\begin{figure}[tb]
    \centering
    \includegraphics[scale=0.66]{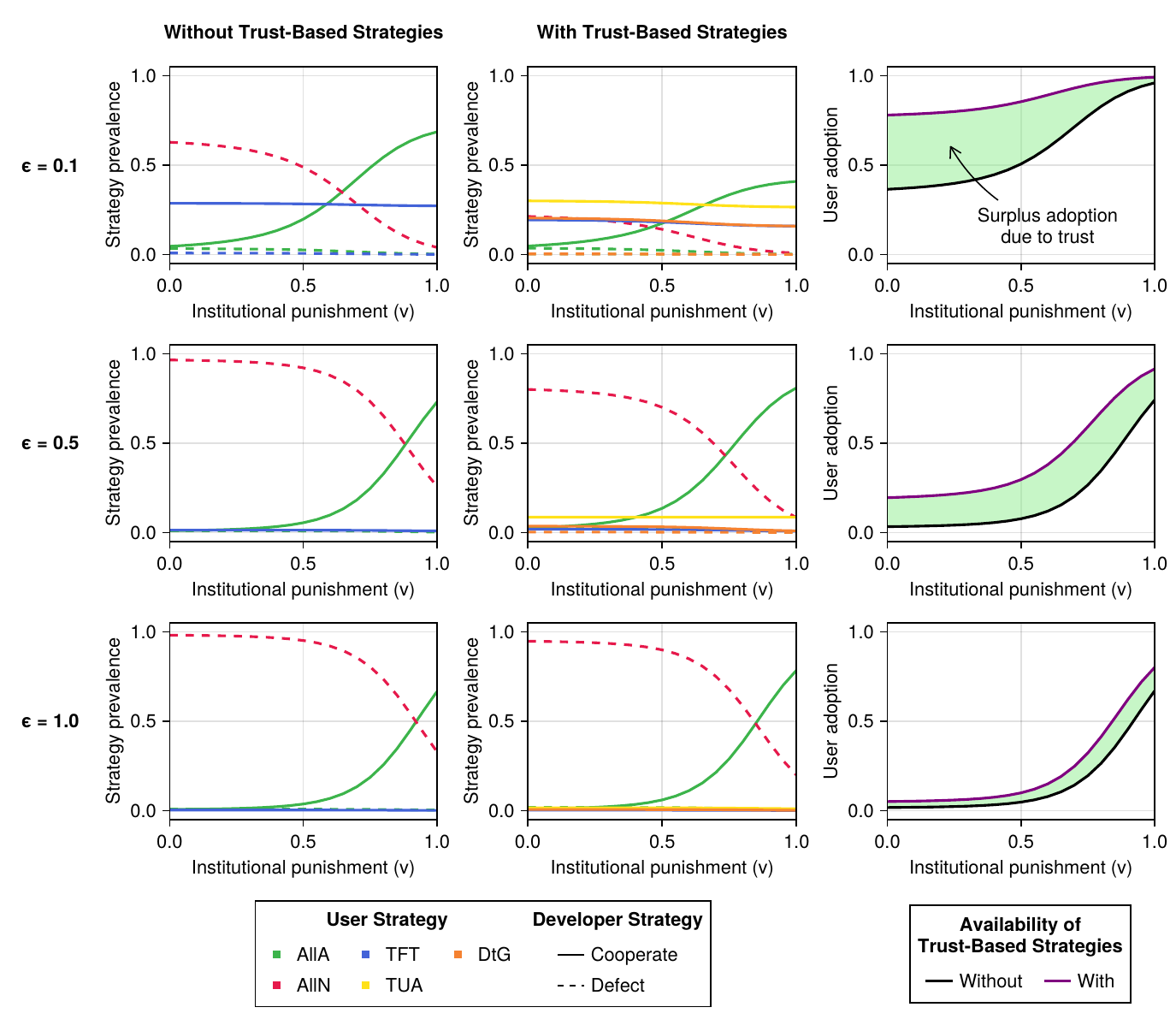}
    \caption{\textbf{Trust-based strategies enhance user adoption, which further increases with stronger institutional punishment.} 
    The first and second columns show the stationary distributions of each state as a function of monitoring cost for scenarios without and with trust-based strategies, respectively. The third column displays the difference in user adoption levels between these two cases across varying monitoring costs. Rows from top to bottom correspond to increasing levels of institutional punishment ($\epsilon=0.1$, $0.5$, and $1$). Parameters are set to $b_u=b_c=4$, $\beta=0.1$, $Z_u=Z_c=100$, $c=0.5$, $\mu=-0.2$, $r=10$, $\theta_t=\theta_D=3$, and $p_T=p_D=0.25$.}
    \label{fig:finite2}
\end{figure}

We first investigate the impact of the monitoring cost $\epsilon$. Consistent with the risk-dominance results where higher $\epsilon$ makes monitoring-intensive strategies less attractive, Figure~\ref{fig:finite1} shows that adoption improves with the introduction of trust-based behaviours, but declines as monitoring becomes more costly. In the absence of trust-based strategies (first column), Tit-for-Tat (TFT) is the primary user strategy when creators cooperate, yet its prevalence falls as $\epsilon$ increases, and unconditional adoption (AllA) only dominates under strong punishment (e.g.\ $v=1$), in line with the condition $v \gtrsim c$. When trust-based strategies are available (second column), TUA and DtG co-exist with TFT and AllA and become dominant at low monitoring costs ($\epsilon \lesssim 0.2$), reflecting the risk-dominance inequalities that favour threshold-based trust when monitoring is cheap. As punishment increases, AllA again emerges as the dominant strategy. A direct comparison (third column) shows that trust-based strategies systematically raise user adoption across all $\epsilon$, even when they are not the most frequent strategies.

We then study the influence of institutional punishment $v$, again connecting to the risk-dominance conditions that favour creator cooperation when $v$ exceeds the effective safety cost. In the scenario without trust-based strategies (Figure~\ref{fig:finite2}, first column), low $v$ leads to a regime dominated by non-adoption (AllN) and creator defection, as predicted when $v < c$. As $v$ increases, AllA and cooperative creators gradually take over. When monitoring is cheap ($\epsilon=0.1$), TFT maintains a significant presence across $v$, reflecting that persistent monitoring is still viable at low cost. With trust-based strategies (second column), high monitoring costs ($\epsilon=0.5, \ 1$) erode the advantage of TUA and DtG. This is in line with the analytical conditions in which large $\epsilon$ makes these strategies less competitive and users primarily rely on AllN or AllA, backed by institutional punishment. At low monitoring costs, however, TUA and DtG coexist with TFT and are relatively insensitive to $v$, while AllA/AllN respond as in the baseline case. The right column shows that, across all $v$, trust-based strategies build and sustain higher adoption, and this advantage grows as punishment strengthens.

Overall, the finite-population results confirm and refine the picture from the risk-dominance analysis: strong punishment ($v \gtrsim c$) is needed to stabilise creator cooperation, while low monitoring costs ($\epsilon$ small) are crucial for the persistence of trust-based user strategies. In stochastic populations, trust heuristics such as TUA and DtG not only become locally favoured in the parameter regions identified by the risk-dominance inequalities, but also expand the basin of attraction of high-adoption, cooperative regimes. Ultimately, although sufficiently strong punishment can enforce unconditional adoption (AllA) on its own, trust-based strategies are key to fostering calibrated trust and maximising overall user adoption in realistic finite populations.

\section{Infinite population analysis}
\label{sec:infinite pop}
\subsection{Equations and equilibria} Using the general framework \eqref{eq: general replicator dynamics}, the replicator dynamics that describes the evolution of the frequencies $x,y,z,w$ of the strategies  $AllA$,  $AllN$, $TFT$, $TUA$ of the users and the frequency $\alpha$ of cooperative creators is given by 
\begin{subequations}
\label{eq: replicator dynamics Model I}
\begin{align}
&\dot{x}=x\Big[f^U_{AllA}-\overline{f^U}
\Big],\label{eq:repdyn-a}\\ 
&\dot{y}=y\Big[f^U_{AllN}-\overline{f^U}
\Big],\\ 
&\dot{z}=z\Big[f^U_{TFT}-\overline{f^U}
\Big],\\
&\dot{w}=w\Big[f^U_{TUA}-\overline{f^U}
\Big],\\
\dot{\alpha}&=\alpha(1-\alpha)(f^{C}_C-f_D^{C}),\\
&(x(0),y(0),z(0),w(0),\alpha(0))=(x_0,y_0,z_0,w_0,\alpha_0),\label{eq:repdyn-e}
\end{align}
\end{subequations}where $(x_0,y_0,z_0,w_0)\in [0,1]^4$ is the initial data. The fitnesses have been calculated in Section \ref{subsection:finitepopulation}.

We recall that the frequency of $DtG$ is given by $1-x-y-z-w$, so we do not need to write an equation for it. Note again that the average fitness $\overline{f^U}=\overline{f^U}(x,y,z,w,\alpha)$ as well as $f_C^{C}=f_C^{Cr}(x,y,z,w,\alpha)$ and $f^{C}_D=f^{Cr}_D(x,y,z,w,\alpha)$ depend on all the frequencies. Thus the system \eqref{eq: replicator dynamics Model I} is a non-trivial coupled system of ordinary differential equations. Equilibria of this system are those points $(x,y,z,w,\alpha)\in [0,1]^5$ that make the right-hand side vanish.  Explicit formulae for the equations can be found in  Appendix \ref{sec:appendix replicator infinite} (Equation \ref{eq: replicator explicit model 1}).

By solving all the pairwise equations $f^U_{*}=\overline{f^U}$, for $*\in\{AllA, AllN, TFT, TUA\}$ and $f^{C}_C=f^{C}_D$, all the 17 \textit{candidates} for equilibria of \eqref{eq: replicator dynamics Model I} can be found explicitly as shown in Tables \ref{tab:4} and \ref{tab: unpopular equilibria}. Note that the majority of entries are points, but some (namely, $p_1$ and $p_2$) are sets.

By definition, the dynamics happen in the five-dimensional hypercube  $[0,1]^5$. It is evident that the equilibria $p_1,\ldots, p_{11}$ are always in $[0,1]^5$ for all values of parameters for which they exist. However, for the rest of the points, some conditions need to be imposed. More precisely, the following conditions determine when the points $p_{12},\ldots, p_{17}$ are in $[0,1]^5$
\begin{itemize}
    \item[(1)] $p_{12}\in [0,1]^5$ iff $0\leq c\leq v, ~\mu<0$.
    \item[(2)] $p_{13}\in [0,1]^5$ iff $0\leq c r\leq b_c r -b_c+v,~\frac{\mu}{r}\leq \frac{\epsilon}{b_{\textrm{u}}}\leq 1$.
    \item[(3)] $p_{14}\in [0,1]^5$ iff $0\leq c r\leq b_c r -b_c+v,~b_{\textrm{u}} \mu\leq \epsilon[p_D r+(1-p_D)\theta_D]$.
    \item[(4)] $p_{15}\notin [0,1]^5$ since $-\frac{c r}{b_c r-b_c+v}<0$.
    \item[(5)] $p_{16}\notin [0,1]^5$ since $\frac{-b_{\textrm{u}}\mu+b_{\textrm{u}}\mu r+r\epsilon}{b_{\textrm{u}}\mu(r-1)}>1$.
    \item[(6)] $p_{17}\notin[0,1]^5$ since  $\frac{-b_{\textrm{u}} \mu +b_{\textrm{u}} \mu  r+p_D r \epsilon -\theta_D p_D \epsilon +\theta_D \epsilon }{-b_{\textrm{u}} \mu +b_{\textrm{u}} \mu  r+p_D r \epsilon -\theta_D p_D \epsilon -r \epsilon +\theta_D \epsilon }\notin [0,1]$.
\end{itemize}
Thus, we have the following
\begin{lemma}
    The points and sets in the Table \ref{tab:4} can be equilibria of the system (\ref{eq: replicator dynamics Model I}). Among them, $p_1$ through to $p_{11}$ are equilibria for all values of the parameters. The points $p_{12},p_{13},p_{14}$ can be equilibria, if the conditions (1)-(3) from the list above are satisfied. The points $p_{15}, p_{16},p_{17}$ do not lie in the five-dimensional hypercube $[0,1]^5$. Their coordinates can be found in the Table \ref{tab: unpopular equilibria}.

\end{lemma}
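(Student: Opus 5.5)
The plan is to work directly from the factorised structure of the system (\ref{eq: replicator dynamics Model I}). Every right-hand side is a product of a frequency and a fitness difference. The creator equation has the form $\alpha(1-\alpha)(f^C_C-f^C_D)$. For a user strategy $X$, we have $f^U_X-\overline{f^U}=\sum_{Y}x_Y(f^U_X-f^U_Y)$, so the user equations can be written using the pairwise differences $\Delta f^U$ already computed in Section~\ref{subsection:finitepopulation}.

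An equilibrium is then obtained by choosing, for each population, a support. This means a set of user strategies with positive frequency, and either $\alpha\in\{0,1\}$ or $\alpha\in(0,1)$. We then require all fitness differences inside each support to vanish. The candidates in Tables~\ref{tab:4} and~\ref{tab: unpopular equilibria} arise exactly this way.

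\textbf{Step 1: verify the candidates.} I will substitute each candidate into the explicit system (Appendix~\ref{sec:appendix replicator infinite}, Equation~\ref{eq: replicator explicit model 1}) and check that the right-hand side vanishes. For the vertices and faces ($p_1,\dots,p_{11}$), this is immediate from the product structure: pure states kill every term with a zero factor, and $\alpha\in\{0,1\}$ kills the creator equation. The remaining condition is a vanishing fitness difference. At $\alpha=1$ this is $f^U_{TFT}=f^U_{DtG}$, since $\Delta f^U_{TFT,DtG}$ carries the factor $(1-\alpha)$. At $\alpha=0$ it is $f^U_{TFT}=f^U_{TUA}$, since $\Delta f^U_{TFT,TUA}$ carries the factor $\alpha$.

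This is how the sets $p_1,p_2$ arise. For instance, at $\alpha=1$, any mixture of TFT and DtG together with the coordinates forced by the creator side gives a whole segment of equilibria. These coordinates involve no division by parameter-dependent quantities, so they lie in $[0,1]^5$ for every admissible parameter choice. That gives the first claim.

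\textbf{Step 2: the interior-type candidates $p_{12},\dots,p_{17}$.} Each has $\alpha^*$ determined by a user indifference condition and a user mixture determined by $f^C_C=f^C_D$.

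1. For $p_{12}$ (an AllA/AllN mixture), $\Delta f^U_{AllA,AllN}=0$ gives $\alpha^*=-\mu/(1-\mu)$. This lies in $[0,1]$ iff $\mu\le0$, with $\mu<0$ for a genuine interior point.
2. The creator indifference on the AllA/AllN face reads $b_c(1-y)-c=(1-y)(b_c-v)$, i.e. $(1-y)v=c$. This is feasible iff $0\le c\le v$.
3. For $p_{13}$ (AllN/TFT), $\Delta f^U_{AllN,TFT}=0$ gives $\alpha^*=(\epsilon-\mu b_u/r)/(b_u-\mu b_u/r)$. This lies in $[0,1]$ iff $\mu/r\le\epsilon/b_u\le1$. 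The creator condition gives $1-y=cr/(b_cr-b_c+v)$, which yields the stated inequality.
4. For $p_{14}$ (AllN/DtG), the analogous computation with $\Delta f^U_{AllN,DtG}$ gives condition (3).

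\textbf{Step 3: rule out $p_{15},p_{16},p_{17}$.} I will exhibit one coordinate of each that falls outside $[0,1]$ under the standing assumptions $b_c,b_u,c,\epsilon>0$, $r>1$ and $b_cr-b_c+v>0$.

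1. For $p_{15}$, the coordinate $-cr/(b_cr-b_c+v)$ is negative.
2. For $p_{16}$, write the ratio as $1+(r\epsilon-b_u\mu(r-1)+\dots)$ and compare numerator with denominator. When $\mu>0$ it exceeds $1$ because $r\epsilon>0$. When $\mu<0$ the denominator is negative, and I will have to show the quotient is still not in $[0,1]$.
3. For $p_{17}$, the coordinate has numerator minus denominator equal to $r\epsilon>0$. So if the denominator is positive the ratio exceeds $1$, and if the denominator is negative I must check the sign of the numerator.

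I expect this sign bookkeeping, in particular the case $\mu<0$ for $p_{16}$ and $p_{17}$, to be the main obstacle. It may require using the membership of other coordinates, or accepting that the quantity lies outside $[0,1]$ only in the relevant parameter regime. Apart from that, the argument is routine substitution, ideally cross-checked symbolically.
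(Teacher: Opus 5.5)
Your route (enumerate supports, impose indifference inside each support, then test membership in $[0,1]^5$) is the same as the paper's. Steps 1--2 go through, with one omission: for $p_{14}$ the computation you describe yields condition (3) \emph{together with} $\epsilon\le b_{\textrm{u}}$, exactly as for $p_{13}$.

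The genuine gap is Step 3, and it is not a matter of sign bookkeeping. The claim $p_{16},p_{17}\notin[0,1]^5$ is false for $\mu<0$, so no argument can close it.
\begin{itemize}
\item For $\mu<0$ the tabulated $\alpha_{16}=1+\frac{r\epsilon}{b_{\textrm{u}}\mu(r-1)}$ lies in $[0,1)$ iff $r\epsilon\le|\mu|b_{\textrm{u}}(r-1)$.
\item $\alpha_{17}=N/(N-r\epsilon)$ with $N=b_{\textrm{u}}\mu(r-1)+\epsilon(\theta_D+(r-\theta_D)p_D)$ lies in $[0,1)$ iff $N\le 0$.
\item Falling back on the other coordinates does not help. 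The AllA share $x^*=\frac{b_c(r-1)-cr+v}{(r-1)(b_c-v)}$ and its complement $1-x^*=\frac{r(c-v)}{(r-1)(b_c-v)}$ (the TFT share of $p_{16}$, the DtG share of $p_{17}$) lie in $[0,1]$ whenever $b_c>v$ and $v\le c\le (b_c(r-1)+v)/r$.
\end{itemize}
Concretely, take the paper's own figure parameters $b_{\textrm{u}}=b_c=4$, $r=10$, $\mu=-0.2$, $c=0.5$, $v=0.1$, $\theta_D=3$, $p_D=1/4$ and $\epsilon=0.5$. Then $p_{16}\approx(0.886,0,0.114,0,0.306)$ and $p_{17}\approx(0.886,0,0,0,0.491)$. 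Direct substitution gives $f^C_C=f^C_D=3.5$ at both points, $f^U_{AllA}=f^U_{TFT}=2/3$ at $p_{16}$, and $f^U_{AllA}=f^U_{DtG}\approx 1.557$ at $p_{17}$. These are genuine rest points inside the cube, in exactly the weak-punishment regime $c>v$ the paper studies. The paper's items (5)--(6) have the same defect: they are correct only when the numerators of $\alpha_{16},\alpha_{17}$ are positive (e.g.\ for $\mu>0$). What can actually be proved is a membership criterion for these points of the same type as (1)--(3).

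Your treatment of $p_{15}$ has a related flaw. You take the tabulated $w=-cr/(b_cr-b_c+v)$ at face value, but that point is not a zero of the vector field. Its DtG share $1-x^*-w$ is nonzero, while its $\alpha$ only equalises AllA and TUA. Solving $f^U_{AllA}=f^U_{TUA}$ together with $f^C_C=f^C_D$ on the AllA/TUA edge reproduces the tabulated $x$ and $\alpha$, but gives $w=1-x^*\ge 0$. With the parameters above and $\theta_T=3$, $p_T=1/4$, this point is $\approx(0.886,0,0,0.114,0.481)$, again inside the cube. So the substitution check you promise in Step 1 has to be applied to every candidate, including those you intend to discard.

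Carried out systematically, your own enumeration also produces an interior-$\alpha$ edge equilibrium that appears in neither table, on the AllN/TUA edge:
\begin{itemize}
\item $w=\frac{cr}{b_c(r-1)+v}$ and $y=1-w$;
\item $\alpha=\frac{r\epsilon-\mu b_{\textrm{u}}}{r\epsilon-\mu b_{\textrm{u}}+rb_{\textrm{u}}-\epsilon(\theta_T+(r-\theta_T)p_T)}$;
\item with the parameters above, the point is $\approx(0,0.861,0,0.139,0.134)$.
\end{itemize}
Hence your assertion that the tabulated candidates arise ``exactly'' from the support enumeration is also not correct.
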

Having determined the existence and location of the equilibria, we need to address the question of their  stability. This can be done by calculating the Jacobian of the system (\ref{eq: replicator dynamics Model I}) at the equilibrium sets and points~\cite{hofbauer1998evolutionary}. We subsequently calculate the eigenvalues of the matrix, which allows us to determine (linear) stability of equilibria.  Some of the explicit formulae for the eigenvalues can be found in Table \ref{tab:eigenvalues}.  Note that some of them are zero -- and for sets of equilibria such as $p_1$ and $p_2$ that is to be expected. 

\begin{lemma}
\label{st: lemma about stability}
For any values of parameters, the following properties hold true:
\begin{enumerate}
    \item the set $p_1$ consists of degenerate   equilibria that cannot be stable, since the last eigenvalue is~ $\frac{b_{\textrm{u}}\mu(r-1)}{r}>0$.
    \item the set $p_2$ consists of degenerate  equilibria that cannot be stable, since the second eigenvalue is equal to $\epsilon>0$.
    \item the point $p_4$ is stable if and only if $\mu<0$. This equilibrium corresponds to a situation where creator plays unsafe and user does not adopt. 
    \item the point $p_5$ is stable if and only if  $\mu>0$ and $v-c<0$. This equilibrium corresponds to a situation where the user adopts even when the creator plays unsafe. 
    \item the point $p_6$ is never stable: its third eigenvalue  is greater than 0. 
    \item the point $p_7$ is never stable: the last eigenvalue is always positive. 
    \item the point $p_8$ is never stable: its first and second eigenvalues  are always positive.
    \item the point $p_9$ is stable if and only if $c-v<0$. This equilibrium corresponds to a situation where the user always adopts when the creator plays safe.
    \item the point $p_{10}$ is never stable: its second eigenvalue is always greater than 0. 
    \item the point $p_{11}$ is never stable: its second eigenvalue $\lambda_2$ is always greater than 0. 
    \item the point $p_{12}$ is stable if $\mu\ge 0$ and $c<v$ or if  $b_{\mathrm{u}} (1-r)+r \epsilon \geq 0$ or $\frac{r\epsilon}{b_{\mathrm{u}}(1-r) + r\epsilon}<\mu$ and $c>v$. Increasing $\theta_1$ and $\theta_2$ can turn this point from unstable to stable. 
\end{enumerate}
\end{lemma}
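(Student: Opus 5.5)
The approach is linearisation: compute the Jacobian of the system~\eqref{eq: replicator dynamics Model I} at each candidate equilibrium of Table~\ref{tab:4} and read stability off the eigenvalues, as in Table~\ref{tab:eigenvalues}. The structure of the replicator dynamics keeps this tractable.

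The first step exploits the vertices. At a vertex of the hypercube (user monomorphic in strategy $X$, creator monomorphic with $\alpha\in\{0,1\}$), the Jacobian is triangular up to the coupling through $\alpha$. Its user eigenvalues are the invasion fitness differences $f^U_{Y}-f^U_{X}$ for $Y\neq X$, evaluated at that $\alpha$. Its creator eigenvalue is $\pm(f^C_C-f^C_D)$, with the sign $+$ at $\alpha=0$ and $-$ at $\alpha=1$. So for $p_4$ through $p_{11}$ I only need to identify which vertex each point is and plug in the explicit differences $\Delta f^U$ and $\Delta f^C_{C,D}$ already computed in Section~\ref{subsection:finitepopulation}.

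I take the vertex cases in turn.
\begin{itemize}
\item For $p_4$ (AllN, D), the user eigenvalues are the payoffs against D of the other user strategies. These are $\mu b_u$, $\mu b_u/r-\epsilon$, $\mu b_u/r-\epsilon$, and $\mu b_u/r$ minus the DtG monitoring term. The creator eigenvalue is $f^C_C-f^C_D=-c<0$. All of these are negative precisely when $\mu<0$, which gives item 3.
\item For $p_5$ (AllA, D), the eigenvalues are $-\mu b_u$ and the negatives of $\Delta f^U_{\text{AllA},\cdot}$ at $\alpha=0$, which are negative when $\mu>0$. The creator eigenvalue is $b_c-c-(b_c-v)=v-c$. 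This yields item 4.
\item For $p_9$ (AllA, C), the user eigenvalues are $-b_u$, $-\epsilon$, $-\epsilon$ and the negative TUA monitoring cost. The creator eigenvalue is $-(b_c-c-b_c+v)=c-v$, which gives item 8.
\item For the never-stable vertices $p_6,p_7,p_8,p_{10},p_{11}$, I exhibit one positive eigenvalue. Typically this is a strictly cheaper monitoring strategy, giving eigenvalue $\epsilon$ or a positive multiple of it, or an AllA/AllN invasion worth $b_u>0$.
\end{itemize}

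For the sets $p_1,p_2$, I restrict the Jacobian to the transversal direction. On a continuum of equilibria the tangential eigenvalues vanish, and one nonzero eigenvalue suffices. For $p_1$ this is the invasion by AllA against a neutral mix at $\alpha=0$, which equals $\mu b_u(r-1)/r$. For $p_2$ it is the $\epsilon$ gain from a less-monitoring strategy. A positive eigenvalue rules out stability even though the set is degenerate, which covers items 1 and 2.

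The main obstacle is $p_{12}$, an interior or edge point with $\alpha=(v-c)/\dots$ type coordinates. There the Jacobian is not triangular, because the creator row couples with the user rows. My plan there is to use the block structure. The user strategies absent at $p_{12}$ give eigenvalues $f^U_Y-\bar f^U$ directly. The present pair (presumably AllA/AllN together with $\alpha$) forms a $2\times2$ block of Lotka--Volterra/matching-pennies type. For that block I compute the trace and determinant, and I expect a zero trace with the determinant's sign set by $(v-c)\mu$. The Routh--Hurwitz conditions, combined with the sign of the external eigenvalues (which produces the $b_u(1-r)+r\epsilon$ and $r\epsilon/(b_u(1-r)+r\epsilon)<\mu$ conditions), give the stated sufficient conditions. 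The $\theta$-dependence enters only through the TUA/DtG external eigenvalues, whose monitoring terms decrease as $\theta$ grows, so monotonicity in $\theta_T,\theta_D$ yields the final remark.

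Where the block is purely imaginary I can only conclude linear neutral stability. In that case I would check the presence of a constant of motion, $\log$-type as in bimatrix replicator dynamics, to justify calling it stable.
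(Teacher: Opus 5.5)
Your vertex analysis matches the paper's approach of reading stability off the invasion eigenvalues, but the plan for $p_{12}$ cannot succeed, because item 11 is false on the state space $[0,1]^5$.

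\textbf{The $p_{12}$ block.} On the face $z=w=0$, $x+y=1$ the dynamics are $\dot x=x(1-x)b_{\mathrm{u}}\bigl(\alpha(1-\mu)+\mu\bigr)$ and $\dot\alpha=\alpha(1-\alpha)(xv-c)$. At $x^*=c/v$, $\alpha^*=\mu/(\mu-1)$ the block therefore has trace $0$ and determinant $\frac{b_{\mathrm{u}}c(v-c)\mu}{v(1-\mu)}$, as you expected. But $p_{12}\in[0,1]^5$ forces $0\le c\le v$ and $\mu\le 0$, hence $(v-c)\mu\le 0$.

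So whenever $p_{12}$ actually lies in the hypercube (with $0<c<v$, $\mu<0$), the block eigenvalues are real, $\pm\sqrt{b_{\mathrm{u}}c(v-c)|\mu|/(v(1-\mu))}$, and $p_{12}$ is a saddle. The block is of coordination type, not matching-pennies type: $f^C_C-f^C_D=xv-c$ rises with the AllA share, and $f^U_{AllA}-f^U_{AllN}$ rises with $\alpha$. The two attractors it separates are $p_4$ and $p_9$.

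Since $\theta_T$ and $\theta_D$ enter only the three transversal eigenvalues, no choice of thresholds removes the positive eigenvalue. The closing remark about increasing $\theta$ therefore fails as well.

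The purely imaginary case you want to rescue with a first integral is $(v-c)\mu>0$. This is exactly what the conditions ``$\mu\ge 0$ and $c<v$'' and ``$c>v$'' select, and there $\alpha^*<0$ or $x^*>1$, so the point lies outside the model. In any case, zero trace means Routh--Hurwitz never gives asymptotic stability, and a constant of motion would give at most neutral stability. The paper's own argument has the same defect: it counts the imaginary case as stable without checking membership in $[0,1]^5$. What your computation actually shows is that $p_{12}$ is never stable in $[0,1]^5$.

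\textbf{Item 1 and related items.} There is also a smaller error you inherited from the statement. The AllA invasion eigenvalue on $p_1$ is $f^U_{AllA}-f^U_{TFT}=\frac{\mu b_{\mathrm{u}}(r-1)}{r}+\epsilon$, not $\frac{\mu b_{\mathrm{u}}(r-1)}{r}$. It is negative once $\mu<-\frac{r\epsilon}{b_{\mathrm{u}}(r-1)}$; with the paper's $\mu=-0.2$, $b_{\mathrm{u}}=4$, $r=10$, $\epsilon=0.1$ it equals $-0.62$.

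An argument valid for all parameters needs a case split. For $\mu\ge 0$ the AllA eigenvalue is at least $\epsilon$. For $\mu<0$ the AllN eigenvalue $\epsilon-\frac{\mu b_{\mathrm{u}}}{r}$ exceeds $\epsilon$. Alternatively, the DtG eigenvalue $\frac{\epsilon(1-p_D)(r-\theta_D)}{r}$ is positive whenever $p_D<1$ and $\theta_D<r$.

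The same split is needed at $p_7$. There the AllA eigenvalue $\frac{\mu b_{\mathrm{u}}(r-1)+\theta_D\epsilon+(r-\theta_D)p_D\epsilon}{r}$ cited in item 6 can be negative for $\mu<0$. In that case you must use the AllN eigenvalue $\frac{\theta_D\epsilon+(r-\theta_D)p_D\epsilon-\mu b_{\mathrm{u}}}{r}>0$. Neither eigenvalue is ``worth $b_{\mathrm{u}}$''.

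At $p_{11}$, by contrast, your choice of the AllA eigenvalue $\epsilon$ is the right one. The creator eigenvalue $\frac{b_c(1-r)+cr-v}{r}$ that the statement cites is negative whenever $b_c(r-1)+v>cr$.
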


The proof of this statement is presented in Appendix \ref{sec:appendix replicator infinite}. We will note here that among the points in Table \ref{tab:4} the only point that changes its stability based on $\theta_1$ and $\theta_2$ is $p_{12}$.

\begin{figure}
\centering
\includegraphics[scale=0.65]{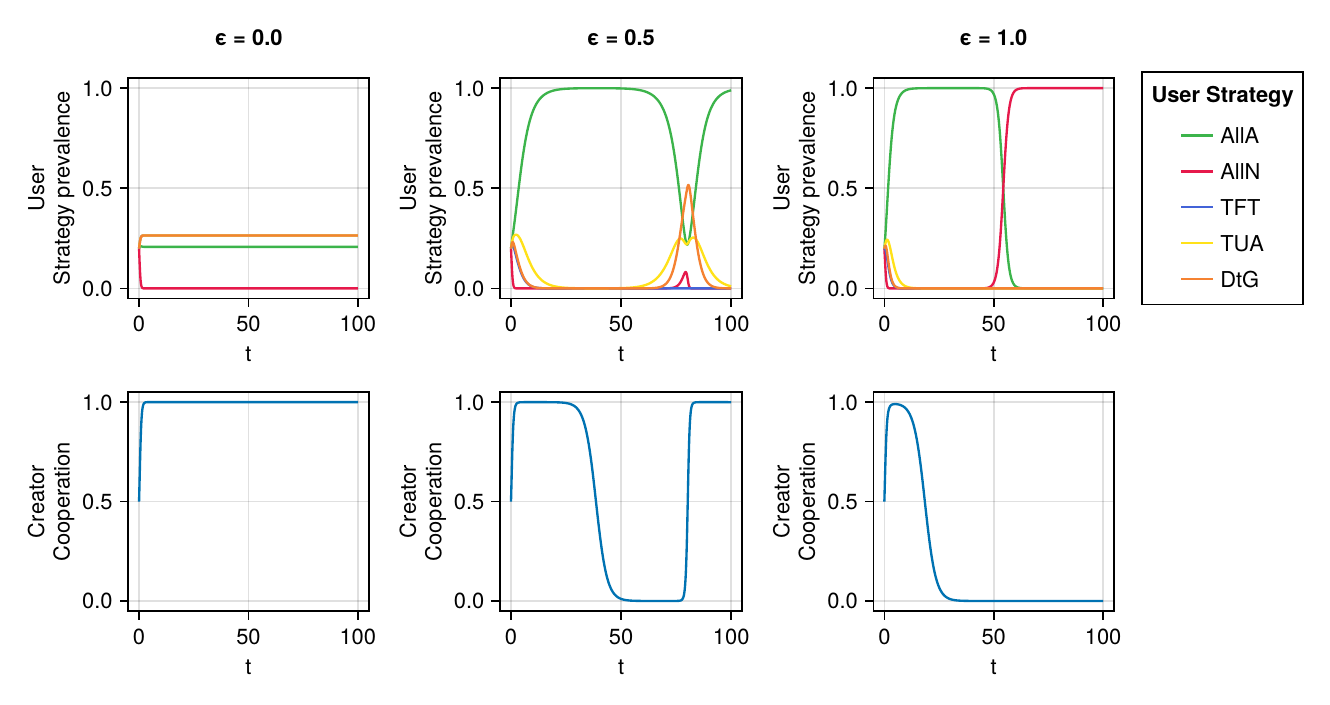}
\caption{Numerical modelling of user (top row) and creator (bottom row) cooperation rates for $p_T = 1/4, p_D = 1/4$, $\theta_T = 3$, $\theta_D = 3$, $b_{\mathrm{u}} = 4$, $b_{\mathrm{c}} = 4$, $r =10, \mu = -2/10$, $v = 1/10, c =1/2$. The initial conditions were equal distribution of all strategies among the users and the creators.}
\label{fig:numerics model 1}
\end{figure}

\subsection{Numerical analysis} 
Figure~\ref{fig:numerics model 1} illustrates numerical trajectories of the frequencies of AllA, AllN, TFT, TUA and DtG among users, together with the fraction of cooperative creators, for different monitoring costs $\epsilon$. These simulations help visualise how the analytical stability results in Lemma~\ref{st: lemma about stability} play out dynamically from generic initial conditions.

For low monitoring cost (left column, $\epsilon$ small), DtG becomes the prevalent user strategy over time, while creators rapidly converge to full cooperation. Intuitively, when checking is cheap, distrust-after-defection (DtG) is an effective way to discipline unsafe creators, and this selective pressure drives the population towards the desirable high-adoption, high-cooperation regime corresponding to \(p_9\) (AllA,~C) and nearby mixed states. Trust-based strategies thus act as a safety net that sustains cooperation even when not all users adopt unconditionally.

As the monitoring cost increases to $\epsilon = 0.5$ (middle column), the dynamics become oscillatory. AllA is frequently the most common user strategy, reflecting the incentive to reduce costly monitoring once cooperation has emerged, but it is periodically displaced by more vigilant strategies (notably DtG), which in turn respond to episodes of increased defection by creators. The cooperation rate among creators also cycles between high and low values, starting from full cooperation and then dropping towards full defection before recovering. These cycles are consistent with the existence of unstable or weakly stable interior structures around \(p_{12}\), and they illustrate how trust-based strategies expand the basin of attraction of cooperative regimes without fully eliminating the risk of recurrent breakdowns in safety.

When the monitoring cost reaches $\epsilon = 1$ (right column), the picture changes qualitatively. AllA initially dominates, as users find it too costly to monitor frequently; however, once creators exploit this by defecting, adoption collapses and AllN takes over. Correspondingly, the fraction of cooperative creators  experiences a sharp and persistent drop, driving the system towards a low-adoption, unsafe regime close to \(p_4\) (AllN,~D). In this high-$\epsilon$ regime, trust-based strategies cannot maintain a sufficient foothold to discipline creators, and the system effectively loses the stabilising influence of monitoring-based trust.

Taken together, these numerical results show how the infinite-population dynamics interpolate between the three analytically identified stable regimes: no adoption with unsafe development (\(p_4\)), unsafe but widely adopted AI (\(p_5\)), and safe, widely adopted AI (\(p_9\)). For low $\epsilon$, trajectories are drawn towards cooperative, high-adoption states; for intermediate $\epsilon$, they exhibit long transients and cycles between cooperation and defection; and for high $\epsilon$, they tend to settle in low-adoption, unsafe outcomes. This pattern reinforces the analytical observation that low monitoring costs and sufficiently strong punishment are jointly necessary to stabilise desirable equilibria. It highlights the dynamic role of trust-based strategies in delaying or preventing transitions to unsafe or low-adoption regimes.

\section{Reinforcement Learning Analysis}
In addition to the imitation-based approach, we look into the population dynamics from the learning-based perspective, where agents update their strategies through trial-and-error.
While the Markov chain analysis considers the limit of rare mutation (exploration), in RL agents are normally utilising a higher exploration, and they adopt mixed strategies rather than a pure one. 
Specifically, we consider the agents that learn through the Q-learning algorithm. First, we introduce the Q-learning algorithm under epsilon-greedy exploration, and then we present the experimental results.

\subsection{Q-learning}
Q-learning~\cite{watkins1992q} is a well-established reinforcement learning algorithm that operates within the framework of a Markov Decision Process (MDP).
Each agent $i$ learns its policy independently. At each time step, the agent observes the current state of the environment, denoted as $s_t\in S$, where $S$ is the set of all possible states. The agent then chooses an available action $a_t\in A(s_t)$, where $A(s_t)$ returns the set of available actions in $s_t$.
We denote the corresponding Q-value as $Q^i(s_t,a_t)$, estimating the expected accumulated discounted reward of choosing action $a_t$ at state $s_t$. After performing the action, the agent receives an immediate reward $r_t\in \Re$ and enters the next state $s_{t+1}\in S$.
The Q-value of the chosen action of the agent is updated as follows:
\begin{equation*}
\label{eq-Q}
Q^i(s_t,a_t) \leftarrow Q^i(s_t,a_t) + \alpha[G_t -Q^i(s_t,a_t)],
\end{equation*}
\noindent where $G_t$ is the estimated accumulated reward, and  $\alpha\in[0,1]$ is the learning rate.
For the analysis on the repeated user-creator game, since there is no state transition, the Q-learning is simplified to
\begin{equation}
\label{eq-Qsimplified}
Q^i(a_t) \leftarrow Q^i(a_t) + \alpha[r_t -Q^i(a_t)].
\end{equation}
An exploration mechanism aims to strike a good balance between exploitation and exploration, maximizing the agent's performance during learning while ensuring desirable convergence guarantees. For epsilon-greedy exploration, the agent selects a random action with probability epsilon; otherwise, it selects the action with the maximum Q-value. Therefore, the stochastic policy $\mathbf{\pi}^i(s)=(\pi(s, a_1), ..., \pi(s, a_{|A|}))\in\Delta$ is evaluated as:
\begin{equation}
\pi(s, a_k)=\frac{\epsilon_L}{|A|} + \frac{\mathds{1}\{a_k\in \textrm{argmax}_a Q^i(s,a)\}}{|\{\textrm{argmax}_a Q^i(s,a)\}|}(1-\epsilon_L),
\end{equation}
\noindent where $\mathds{1}\{\cdot\}$ denotes the indicator function, $\epsilon_L\in[0,1]$ is the exploration rate.

\subsection{The effect of threshold-based trust strategies in RL}
We conduct simulations with two populations of $100$ Q-learning agents for users and creators. For each episode, users and creators are randomly paired to play the repeated game and update their Q-values. The learning rate and exploration rate are set as $\alpha=0.05$ and $\epsilon_L=0.01$. The parameters of the repeated user-creator game are the same as the previous analysis, where we vary the monitoring cost $\epsilon$ from $0$ to $2$.
\begin{figure}
\centering
\includegraphics[width=0.9\textwidth]{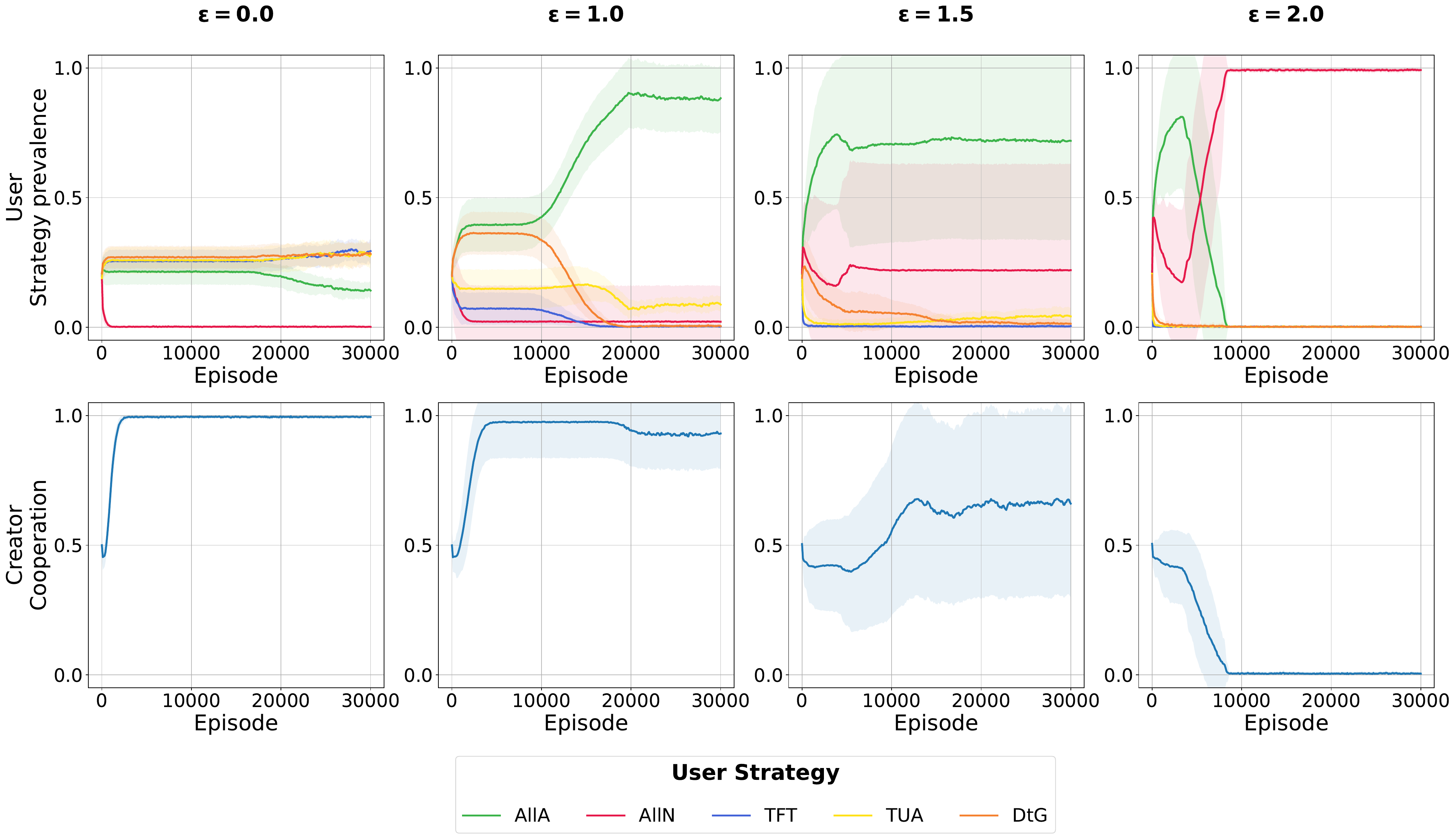}
\caption{Percentage of users (top row) adopting different strategies and creator (bottom row) cooperation rates across episodes for different monitoring costs. The prevalence of threshold-based trust strategies directly affects the creator cooperation rate. As monitoring costs rise, more users learn to adopt AllA in the middle of the simulation, leading to a decrease in the cooperation rate. Parameters: $p_T = 1/4, p_D = 1/4$, $\theta_T = 3$, $\theta_D = 3$, $b_{\mathrm{u}} = 4$, $b_{\mathrm{c}} = 4$, $r =10, \mu = -2/10$, $v = 1/10, c =1/2$. For Q-learning, $\alpha=0.05$ and $\epsilon_L=0.01$.}
\label{fig-RL1}
\end{figure}

Figure \ref{fig-RL1} shows the percentage of users (top row) and creators (bottom row) adopting different strategies across episodes for different monitoring costs; the results are averaged over $50$ simulations. Every simulation is converged, although we only display up to the $30,000^{th}$ episode.
The prevalence of threshold-based trust strategies directly affects the creator cooperation rate.

In the absence of monitoring costs, cooperation emerged among the creators. The users adopt TFT, TUA, and DtG evenly, followed by AllA. With zero monitoring costs, the user's payoffs for these four strategies are equivalent when playing against a cooperative creator. The payoff for AllA is lower when playing against a defective creator, thus leading to its lower adoption rate at the convergence.
As the monitoring costs increase to $1$, the advantage of AllA increases since those users are not paying extra costs, and we observe that the adoption of AllA rises over time. Since the payoff of TUA is only slightly affected, we observe that its adoption rate remains at a certain level, and this is the key to preventing creators from defecting.

As the monitoring costs further increase (see $\epsilon=1.5$), the adoption of TUA further reduces. More creators are switching to adopt defection, and thus the prevalence of AllN rises.
Finally, when the monitoring costs grow to $\epsilon=2.0$, the adoption of TFT and TUA dies out quickly, and thus the creators become defective over time. This finally contributes to the emergence of a defection-dominated and AllN society.

Overall, the adoption of threshold-based trust strategies (TUA or DtG) affects the defectors' payoff largely, even for a small proportion of the population. At low monitoring costs, the payoff difference between AllA and TUA is only a tiny amount when playing against a cooperator. Thus, in RL where multiple strategies can co-exist due to exploration, we  obtain a cooperative society more robust to the monitoring cost compared to  the rare mutation setting.

\begin{figure}
\centering
\includegraphics[width=0.9\textwidth]{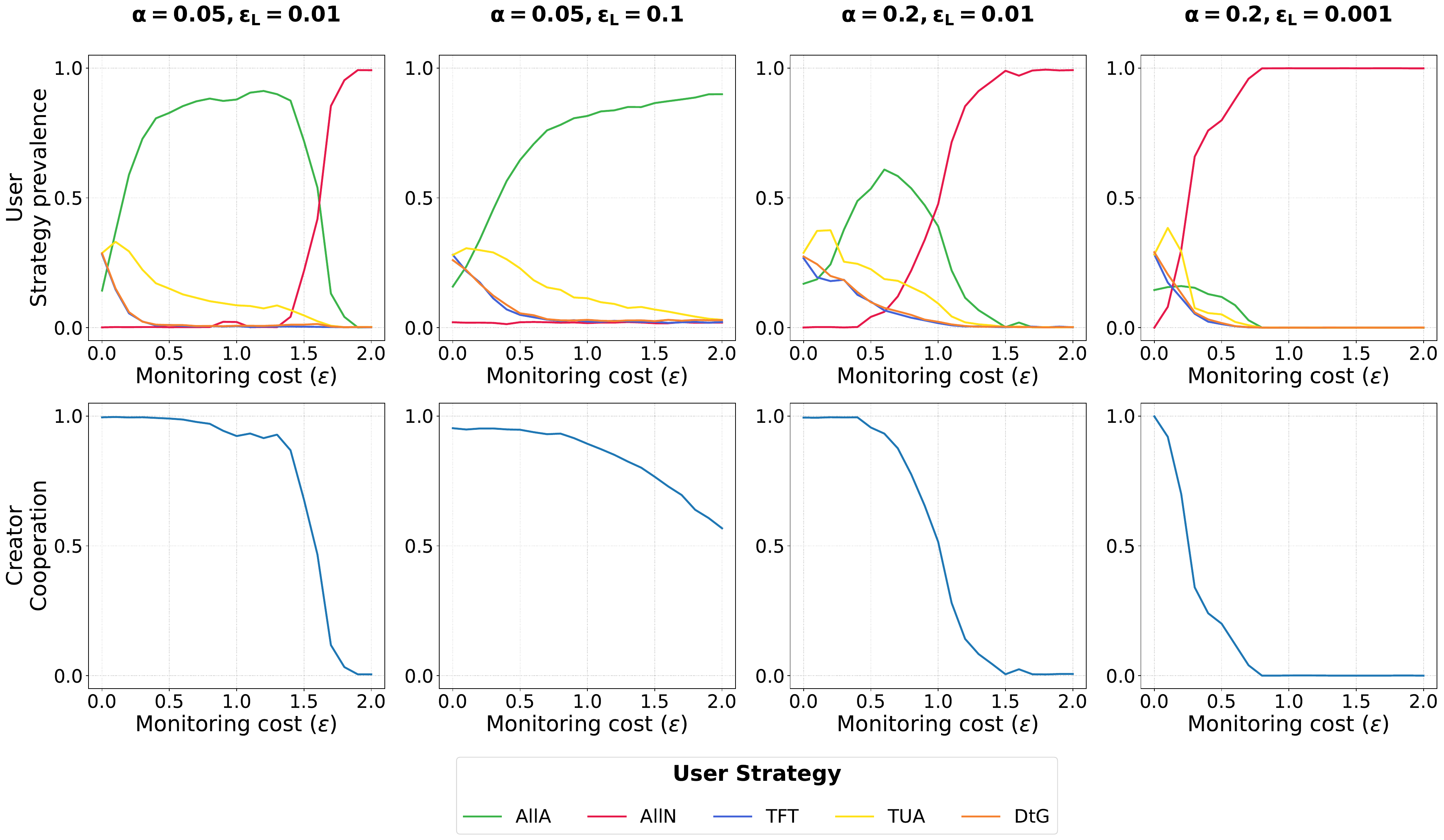}
\caption{The converged percentage of users (top row) adopting different strategies and creator (bottom row) cooperation rates across monitoring costs for different learning parameters. The cooperation rate becomes more robust to the monitoring costs as the exploration rate increases or the learning rate decreases. Parameters: $p_T = 0.25, p_D = 0.25$, $\theta_T = 3$, $\theta_D = 3$, $b_{\mathrm{u}} = 4$, $b_{\mathrm{d}} = 4$, $r =10, \mu = -2/10$, $v = 1/10, c =1/2$.}
\label{fig-RL2}
\end{figure}
\subsection{The effect of learning and exploration rates}
We examine key parameters for Q-learning (learning and exploration rates) and how they  affect convergence in the game. 
We conduct experiments on four different sets of parameters: $\alpha=0.05, \epsilon_L=0.01$; $\alpha=0.05, \epsilon_L=0.1$; $\alpha=0.2, \epsilon_L=0.01$; and $\alpha=0.2, \epsilon_L=0.001$.

Figure \ref{fig-RL2} shows the converged percentages of users (top row) adopting different strategies and of creators (bottom row) cooperating across monitoring costs for different learning parameters; the results are averaged over $50$ simulations. 
In general, we  observe that stable learning (i.e. with a low learning rate) with high exploration promotes cooperation.
Taking $\alpha=0.05, \epsilon_L=0.01$ (the left-most column in Figure \ref{fig-RL2}) as the base case, 
when a higher exploration rate is adopted ($\epsilon_L=0.1$), the creators' cooperation is more robust to monitoring costs. Even when this cost rises to $2$, the cooperation rate remains significant ($56\%$), and the majority of users are adopting AllA.
On the other hand, at a higher learning rate (see $\alpha=0.2$), the creators adopt defection more quickly with the monitoring costs. When the cost rises to $1.5$, we obtain a full creator defection and user in-adoption.

As we have discussed above, the co-existence of multiple strategies, especially with the threshold-based trust strategies, is key to a cooperative society. When the learning rate is high, the user agents adopt the AllA strategy much more quickly, leading to the extinction of TUA and DtG. The same effect happens when a low exploration rate is set.
We further verify this with $\alpha=0.2, \epsilon_L=0.001$. We can see that the full defection emerges when the monitoring cost rises to $1$. 

This notable observation is in line with the results obtained from the imitation-based evolutionary game studies above. In the assumption of a rare mutation, we obtain a homogeneous user population, which is less robust to the monitoring cost.




\section{Conclusions and Implications for AI Governance}
Our analysis offers a rigorous account of how user trust and AI developer behaviour co-evolve in a repeated, asymmetric interaction, where trust is operationalised as reduced monitoring under costly observation~\cite{perret2026disentangling}. Users choose between unconditional adoption or non-adoption, persistent monitoring (TFT), and two trust-based heuristics (TUA and DtG) that adapt monitoring after observed cooperation or defection. Creators choose between developing safe and unsafe AI, and might face institutional punishment when unsafe behaviour is detected. Our analysis adopts complementary stochastic finite-population models and infinite-population replicator dynamics, as well as reinforcement learning simulations, to characterise the parameter regimes under which safe development and appropriate trust can emerge and persist. In doing so, the work provides the first integrated treatment of trust-as-monitoring in an asymmetric user–developer game, linking evolutionary dynamics and learning-based adaptation to concrete AI governance design questions.

Across these approaches, a consistent picture is observed. When monitoring is relatively cheap and punishment for unsafe behaviour is meaningful, developers tend to provide safe systems and users adopt them widely. When monitoring becomes too costly or punishment is weak, creators are increasingly pushed towards unsafe strategies, while users either stop adopting altogether or, more worryingly, keep adopting unsafe systems. In the infinite-population analysis, three simple long-run regimes dominate: one with non-adoption and unsafe development, one with unsafe but widely adopted systems, and one with safe systems that are widely adopted. Only the last of these is socially desirable, and it is sustained when it is clearly more attractive for developers to comply with safety requirements than to risk sanctions. Trust-based user strategies do not change which long-run regimes are possible, but they do strongly influence which of these regimes the system converges to, and how quickly. In finite populations and in reinforcement learning, they improve adoption and help maintain cooperation by creators when monitoring is affordable, but lose their advantage once monitoring costs become high.

These findings have direct implications for AI governance. First, they show that trustworthy AI cannot be achieved by formal rules or appeals to trust alone. It depends on an ecosystem in which users can afford to remain at least partially vigilant   \citep{han2026socialphysicsageartificial}, so that some monitoring pressure is always present. Policies that lower the real cost of checking AI systems, for example, through enhanced transparency, standardised documentation, accessible audits, red-team reports, and independent evaluations—directly support this kind of calibrated trust and make it harder for unsafe strategies to spread. Second, our results highlight the importance of sufficiently strong and credible enforcement: if the expected burden of complying with safety requirements exceeds the expected consequences of being caught, unsafe development becomes evolutionarily attractive even when users initially monitor. Finally, they support viewing trust in AI not as a static attitude but as an adaptive, ongoing process about when and how to monitor AI system safety (including, for instance, via media investigation and public scrutiny \cite{balabanova_media_2025,FonsecaALIFE2025}).


Our model also has several limitations. We consider users and developers as homogeneous, well-mixed populations; we represent regulators only implicitly through a punishment parameter; and we focus on a small set of representative user and creator strategies. We also abstract from information asymmetries (e.g., partial visibility of incidents) and heterogeneity in monitoring capacity or risk preferences. Future work should relax these assumptions by incorporating network or market structure (e.g., multiple competing developers or jurisdictions)~\cite{Szabo2007}, explicitly modelling regulators and other oversight actors, and integrating richer, empirically grounded models of human trust heuristics and behavioural biases.

Overall, our analysis shows that trustworthy AI is not solely a matter of technical design or formal regulation, but of aligning incentives in an ecosystem where users can afford to remain vigilant and developers face meaningful consequences for unsafe behaviour. Ensuring low monitoring costs and sufficiently strong, credible sanctions is therefore central to sustaining high adoption of  safe AI systems.




\section*{Acknowledgements}
T.A.H. and Z.S. are supported by EPSRC (grant EP/Y00857X/1). M.H.D  and N.B. are supported by EPSRC (grant EP/Y008561/1) and a Royal International Exchange Grant IES-R3-223047.
E.F.D. is supported by an F.W.O. Senior Postdoctoral Grant (12A7825N),
A.M.F. and H.C.F. were supported by INESC-ID and the project CRAI C645008882-00000055/510852254 (IAPMEI/PRR). 
D.P is supported by the European Union through the ERC INSPIRE grant (project number 101076926); views and opinions expressed are however those of the authors only and do not necessarily reflect those of the European Union, the European Research Council Executive Agency or the European Council.
C.L. and P.T. acknowledge the support of the Leverhulme Trust for the Research Grant RPG-2023-050.
M.C. and V.V.P. are supported by the Andalusian Government under grant Emergia (EMERGIA21\_00139) and by grants CNS2024-154265 and PID2024-156434NB-I00 (CONFIA2), funded by MCIN/AEI/10.13039/501100011033 and, as appropriate, by "ESF Investing in your future".
An earlier version of this work was produced during the workshop ``AI Governance Modelling'' at Teesside University, funded through the generous support from the university's Policy Fund.

\appendix
\setcounter{figure}{0}
\setcounter{equation}{0}
\setcounter{table}{0}
\renewcommand*{\thefigure}{A\arabic{figure}}
\renewcommand*{\theequation}{A\arabic{equation}}
\renewcommand*{\thetable}{A\arabic{table}}

\section{Additional details for replicator dynamics analysis}
\label{sec:appendix replicator infinite}
Below, we present the explicit form of the equations for replicator dynamics. 
\begin{equation}
    \label{eq: replicator explicit model 1}
    \begin{cases}
        \dot{x}&= x \Bigg(b_{\textrm{u}} (\alpha  (-\mu )+\alpha +\mu )-\frac{(\alpha -1) b_{\textrm{u}} \mu  (-r x+x+y-1)}{r}+\alpha  b_{\textrm{u}} (y-1)\\
        &-\frac{\epsilon  (-r (\alpha +\alpha  (p_T-2) w+w-\alpha  (x+y+z)+z)+\alpha  \theta_T (p_T-1) w-(\alpha -1) p_D (r-\theta_D) (w+x+y+z-1)-(\alpha -1) \theta_D (w+x+y+z-1))}{r}\Bigg),\\
        \dot{y}&= y \Bigg(-\frac{(\alpha -1) b_{\textrm{u}} \mu  (-r x+x+y-1)}{r}+\alpha  b_{\textrm{u}} (y-1)\\&-\frac{\epsilon  (-r (\alpha +\alpha  (p_T-2) w+w-\alpha  (x+y+z)+z)+\alpha  \theta_T (p_T-1) w-(\alpha -1) p_D (r-\theta_D) (w+x+y+z-1)-(\alpha -1) \theta_D (w+x+y+z-1))}{r}\Bigg),\\
        \dot{z}&=\frac{1}{r}\Bigg(z (-(\alpha -1) b_{\textrm{u}} \mu  (-r x+x+y)+\alpha  b_{\textrm{u}} r y+\epsilon  (r (\alpha +\alpha  (p_T-2) w+(\alpha -1) p_D (w+x+y+z-1)\\&+w-\alpha  (x+y+z)+z-1)-\alpha  \theta_T (p_T-1) w-(\alpha -1) \theta_D (p_D-1) (w+x+y+z-1)))\Bigg),\\
        \dot{w}& = \frac{1}{r}\Bigg(w (-(\alpha -1) b_{\textrm{u}} \mu  (-r x+x+y)+\alpha  b_{\textrm{u}} r y+\epsilon  (r (\alpha  (p_T-2) w-\alpha  (p_T+x+y+z-2)\\&+(\alpha -1) p_D (w+x+y+z-1)+w+z-1)-\alpha  \theta_T (p_T-1) (w-1)\\&-(\alpha -1) \theta_D (p_D-1) (w+x+y+z-1)))\Bigg),\\
        \dot{\alpha}&=\frac{(\alpha -1) \alpha  (b_c (r-1) (x+y-1)+c r+v (-r x+x+y-1))}{r}
    \end{cases}
\end{equation}
As we mentioned above, not all points that turn the right hand side of the equations zero belong to the $[0,1]^5$ hypercube. The table \ref{tab: unpopular equilibria} contains the zeros that are outside of the area of our consideration.

\begin{table}
    \centering
    \caption{\textbf{Coordinates of the equilibrium points}}
    \renewcommand{\arraystretch}{2.5}
    \begin{tabular}{c|c|c|c|c|c}
Eq. sets & $x$ & $y$ & $z$ & $w$ & $\alpha$ \\
\hline
\hline
$p_1$    & 0 & 0 & 1 $-w$ & $\in[0,1]$ & 0 \\ \hline
$p_2$    & 0 & 0 & $\in[0,1]$ & 0 & 1 \\ \hline
$p_3$    & 0 & 0 & 1 & 0 & 0 \\ \hline
$p_4$    & 0 & 1 & 0 & 0 & 0 \\ \hline
$p_5$    & 1 & 0 & 0 & 0 & 0 \\ \hline
$p_6$    & 0 & 0 & 0 & 1 & 0 \\ \hline
$p_7$    & 0 & 0 & 0 & 0 & 0 \\ \hline
$p_8$    & 0 & 1 & 0 & 0 & 1 \\ \hline
$p_9$    & 1 & 0 & 0 & 0 & 1 \\ \hline
$p_{10}$ & 0 & 0 & 0 & 1 & 1 \\ \hline
$p_{11}$ & 0 & 0 & 0 & 0 & 1 \\ \hline
$p_{12}$ & $\frac{c}{v}$ & $\frac{v-c}{v}$ & 0 & 0 & $\frac{\mu }{\mu -1}$ \\ \hline
$p_{13}$ &
0 &
$\frac{b_c r-b_c-c r+v}{b_c r-b_c+v}$ &
$\frac{c r}{b_c r-b_c+v}$ &
0 &
$\frac{r \epsilon -b_{\textrm{u}} \mu }{b_{\textrm{u}} (r-\mu )}$ \\ \hline
$p_{14}$ &
0 &
$\frac{b_c r-b_c-c r+v}{b_c r-b_c+v}$ &
0 &
0 &
$\frac{-b_{\textrm{u}} \mu +p_D r \epsilon -\theta_D p_D \epsilon +\theta_D \epsilon }{-b_{\textrm{u}} \mu +b_{\textrm{u}} r+p_D r \epsilon -\theta_D p_D \epsilon -r \epsilon +\theta_D \epsilon }$ \\ \hline
 \hline
\end{tabular}
    \label{tab:4}
\end{table}

\begin{table}
    \centering
    \caption{\textbf{Coordinates of the equilibrium points}}
    \renewcommand{\arraystretch}{2.5}
    \begin{tabular}{c|c|c|c|c|c}
        Eq. points & $x$ & $y$ & $z$ & $w$ & $\alpha$\\
        \hline
        \hline
         $p_{15}$& $\frac{b_c r-b_c-c r+v}{(r-1) (b_c-v)}$ & 0 & 0 & $-\frac{c r}{b_c r-b_c+v}$ & $\frac{b_{\textrm{u}} \mu -b_{\textrm{u}} \mu  r-r \epsilon }{b_{\textrm{u}} \mu -b_{\textrm{u}} \mu  r+p_T r \epsilon -\theta_T p_T \epsilon -r \epsilon +\theta_T \epsilon }$ \\
         \hline
         $p_{16}$& $\frac{b_c r-b_c-c r+v}{(r-1) (b_c-v)}$ & 0 & $-\frac{r (c-v)}{(r-1) (v-b_c)}$ & 0 & $\frac{-b_{\textrm{u}} \mu +b_{\textrm{u}} \mu  r+r \epsilon }{b_{\textrm{u}} \mu  (r-1)}$\\
         \hline
         $p_{17}$& $\frac{b_c r-b_c-c r+v}{(r-1) (b_c-v)}$ & 0 & 0 & 0 & $\frac{-b_{\textrm{u}} \mu +b_{\textrm{u}} \mu  r+p_D r \epsilon -\theta_D p_D \epsilon +\theta_D \epsilon }{-b_{\textrm{u}} \mu +b_{\textrm{u}} \mu  r+p_D r \epsilon -\theta_D p_D \epsilon -r \epsilon +\theta_D \epsilon }$ \\
         \hline
    \end{tabular}
    \label{tab: unpopular equilibria}
\end{table}
\begin{table}
    \centering
    \caption{\textbf{Eigenvalues of equilibrium points
    }}
    \renewcommand{\arraystretch}{2.5}
    \begin{tabular}{c|c|c|c|c|c}
        Eq. sets & $\lambda_1$ & $\lambda_2$ & $\lambda_3$ & $\lambda_4$ & $\lambda_5$\\
        \hline
        \hline
         $p_1$& $0$&$\frac{b_c (r-1)-c r+v}{r}$&$-\frac{(p_D-1) \epsilon  (r-\theta_D)}{r}$&$\epsilon -\frac{b_{\textrm{u}} \mu }{r}$&$\frac{b_{\textrm{u}} \mu  (r-1)}{r}+\epsilon$ \\
         \hline
         $p_2$& $0$&$\frac{b_c (-r)+b_c+c r-v}{r}$&$\epsilon $&$\epsilon -b_{\textrm{u}}$&$-\frac{(p_T-1) \epsilon  (r-\theta_T)}{r}$\\
         \hline
         $p_3$& $0$&$\frac{b_c (r-1)-c r+v}{r}$&$-\frac{(p_D-1) \epsilon  (r-\theta_D)}{r}$&$\epsilon -\frac{b_{\textrm{u}} \mu }{r}$&$\frac{b_{\textrm{u}} \mu  (r-1)}{r}+\epsilon$ \\
         \hline
         $p_4$& $-c$&$b_{\textrm{u}} \mu $&$\frac{b_{\textrm{u}} \mu }{r}-\epsilon $&$\frac{b_{\textrm{u}} \mu }{r}-\epsilon $&$\frac{b_{\textrm{u}} \mu +p_D \epsilon  (\theta_D-r)-\theta_D \epsilon}{r}$\\
         \hline
         $p_5$&$v-c$&$-b_{\textrm{u}} \mu $&$b_{\textrm{u}} \mu  \left(\frac{1}{r}-1\right)-\epsilon $&$b_{\textrm{u}} \mu  \left(\frac{1}{r}-1\right)-\epsilon $&$\frac{-b_{\textrm{u}} \mu  (r-1)+p_D \epsilon  (\theta_D-r)-\theta_D \epsilon}{r}$\\
         \hline
         $p_6$&$0$&$\frac{b_c (r-1)-c r+v}{r}$&$-\frac{(p_D-1) \epsilon  (r-\theta_D)}{r}$&$\epsilon -\frac{b_{\textrm{u}} \mu }{r}$&$\frac{b_{\textrm{u}} \mu  (r-1)}{r}+\epsilon$\\
         \hline
         $p_7$&$\frac{b_c (r-1)-c r+v}{r}$&$\frac{(p_D-1) \epsilon  (r-\theta_D)}{r}$&$\frac{(p_D-1) \epsilon  (r-\theta_D)}{r}$&$\frac{-b_{\textrm{u}} \mu +p_D \epsilon  (r-\theta_D)+\theta_D \epsilon }{r}$&$\frac{b_{\textrm{u}} \mu  (r-1)+p_D \epsilon  (r-\theta_D)+\theta_D \epsilon }{r} $\\
         \hline
         $p_8$&$b_{\textrm{u}}$&$c$&$b_{\textrm{u}}-\epsilon $&$b_{\textrm{u}}-\epsilon $&$b_{\textrm{u}}+\frac{\theta_T (p_T-1) \epsilon }{r}-p_T \epsilon $\\
         \hline
         $p_9$&$ -b_{\textrm{u}}$&$c-v$&$-\epsilon $&$-\epsilon $&$\frac{\theta_T (p_T-1) \epsilon }{r}-p_T \epsilon$ \\
         \hline
         $p_{10}$&$\frac{b_c (1-r)+c r-v}{r}$&$\frac{\epsilon  (\theta_T+p_T (r-\theta_T))}{r}$&$\frac{(p_T-1) \epsilon  (r-\theta_T)}{r}$&$\frac{(p_T-1) \epsilon  (r-\theta_T)}{r}$&$\frac{\epsilon  (\theta_T+p_T (r-\theta_T))}{r}-b_{\textrm{u}}$\\
         \hline
         $p_{11}$& $0$&$\frac{b_c (-r)+b_c+c r-v}{r}$&$\epsilon $&$\epsilon -b_{\textrm{u}}$&$-\frac{(p_T-1) \epsilon  (r-\theta_T)}{r} $\\
         \hline
         $p_{12}$&$ -\frac{i \sqrt{b_{\textrm{u}}} \sqrt{c} \sqrt{c-v}}{\sqrt{\frac{\mu -1}{\mu }} \sqrt{v}}$&$\frac{i \sqrt{b_{\textrm{u}}} \sqrt{c} \sqrt{c-v}}{\sqrt{\frac{\mu -1}{\mu }} \sqrt{v}}$&$\frac{r (b_{\textrm{u}} \mu -\mu  \epsilon +\epsilon )-b_{\textrm{u}} \mu }{(\mu -1) r}$&$\frac{b_{\textrm{u}} \mu  (r-1)+p_D \epsilon  (r-\theta_D)+\epsilon  (\theta_D-\mu  r)}{(\mu -1) r}$&\begin{small}$\frac{b_{\textrm{u}} \mu  (r-1)-\mu  \epsilon  (\theta_T+p_T (r-\theta_T))+r \epsilon }{(\mu -1) r}$\end{small} \\
         \hline         
    \end{tabular}
\label{tab:eigenvalues}
\end{table}
\begin{proof}[Proof of Lemma \ref{st: lemma about stability}]
    The majority of the statements is clear and can be ascertained by looking at Table \ref{tab:eigenvalues}; we will explain only the last one.   

    The first two eigenvalues of the point $p_{12}$ include the expressions $\sqrt{\frac{\mu-1}{\mu}}$.  These are real if and only if $\mu<0$; if they are imaginary, the point automatically is unstable. On the other hand, the condition for the last three eigenvalues can be reduced to the following list:
    \begin{equation}
        \begin{cases}
           \frac{b_{\mathrm{u}} \mu  (r-1)+p_2 r \epsilon -\mu  r \epsilon }{(p_2-1) \epsilon }<\theta_2\\
          \mu \geq 0\ \textrm{or} \ b_{\mathrm{u}} (1-r)+r \epsilon \geq 0\ \textrm{or} \  \frac{r \epsilon }{b_{\mathrm{u}} (1-r)+r \epsilon }<\mu \\ \mu \geq 0\ \textrm{or}\ \frac{-b_{\mathrm{u}} \mu +b_{\mathrm{u}} \mu  r-\mu  p_1 r \epsilon +r \epsilon }{\mu  \epsilon -\mu  p_1 \epsilon }<\theta_1
        \end{cases}
    \end{equation}
    Note that all the conditions above are connected by a logical ``and". Therefore, in any case $\mu>0$ is a necessary condition for the eigenvalues to be negative. This, as stated above, makes the first two eigenvalues purely real, unless $c<v$. 

    Alternatively, we can impose one of the the conditions $b_{\mathrm{u}} (1-r)+r \epsilon \geq 0$ or $\frac{r\epsilon}{b_{\mathrm{u}}(1-r) + r\epsilon}<\mu$; this, by increasing $\theta_1$ and $\theta_2$ sufficiently, we can achieve stability. In this case, in order for the first two eigenvalues to remain imaginary, it must hold that  $\mu<0$ and $c-v>0$  (otherwise we return to the first case)
\end{proof}
\subsection{Evolutionary dynamics in the absence of trust-based strategies}
In this section, we analyse the model without the trust-based strategies TUA and DtG. We thus assume that users can adopt one of the following strategies: AllA, AllN or TFT. Adopting the latter still requires them to pay a cost $\epsilon$ for checking the action of creators in the previous round. We now write the replicator dynamics for this model. 

The equations are a simplified version of (\ref{eq: replicator dynamics Model I})-- for  brevity we do not give the explicit forms of the payoffs-- and have the form
\begin{equation}
    \label{eq: three strategies replicator model 1}
    \begin{cases}
      \dot{x} &=   x \left(\frac{(\alpha -1) b_{\textrm{u}} \mu  ((r-1) (x-1)-y)}{r}+\alpha  b_{\textrm{u}} y-\epsilon  (x+y-1)\right),\\
      \dot{y}&= y \left(-\frac{(\alpha -1) b_{\textrm{u}} \mu  (-r x+x+y-1)}{r}+\alpha  b_{\textrm{u}} (y-1)-\epsilon  (x+y-1)\right),\\
      \dot{\alpha}& = \frac{(\alpha -1) \alpha  (b_{\textrm{c}} (r-1) (x+y-1)+c r+v (-r x+x+y-1))}{r}
    \end{cases}
\end{equation}
The possible equilibria are listed in Table~\ref{tab:three strategies}.

We can establish their stability by calculating the Jacobian again. The eigenvalues are presented in Table~\ref{tab:eigenvalues for replicator three strategies}. 
\begin{table}
\centering
    \begin{tabular}{@{}c|c|c|c@{}}\toprule
        Equilibrium points & $x$ & $y$ & $\alpha$ \\ \midrule
        $q_1$ & 0             & $1-\frac{c r}{b_{\textrm{c}} (r-1)+v}$ & $\frac{r \epsilon -b_{\textrm{u}} \mu }{b_{\textrm{u}} r-b_{\textrm{u}} \mu }$ \\
        \hline
        $q_2$              & 0   & 1   & 0        \\
        \hline
        $q_3$              & 1   & 0   & 0        \\
        \hline
        $q_4$              & 0   & 0   & 0        \\
        \hline
        $q_5$              & 0   & 1   & 1        \\
        \hline
        $q_6$              & 1   & 0   & 1        \\
        \hline
        $q_7$              & 0   & 0   & 1        \\
        \hline
        $q_8$ & $\frac{c}{v}$ & $ 1-\frac{c}{v}$ & $\frac{\mu }{\mu -1}$ \\ \bottomrule
    \end{tabular}
    \caption{Equilibria for replicator dynamics for three strategies. }
    \label{tab:three strategies}
\end{table}
We can make the following claim:
\begin{lemma}
The following properties of equilibria for three strategies hold true:
    \begin{enumerate}
        \item $q_2$ is stable if and only if $\mu<0$;
        \item $q_3$ is stable if and only if $\mu>0$ and $v-c<0$;
        \item $q_6$ is stable if and only if $c<v$.
    \end{enumerate}
\end{lemma}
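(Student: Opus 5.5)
The plan is to linearise the three-dimensional system \eqref{eq: three strategies replicator model 1} at each of the three vertices $q_2=(0,1,0)$, $q_3=(1,0,0)$ and $q_6=(1,0,1)$ of the state space (simplex of users $\times$ $[0,1]$ for creators). I will read off the eigenvalues from the standard structure of the replicator Jacobian at a pure state, rather than differentiating the explicit polynomials term by term. This is the same mechanism that produced the entries for $p_4,p_5,p_9$ in Table~\ref{tab:eigenvalues}, restricted to the strategies AllA, AllN and TFT.

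The key structural fact is the following. At a vertex where users play a pure strategy $i$ and creators a pure strategy $Y$, each equation $\dot{x}_j = x_j(f^U_j-\overline{f^U})$ with $j\neq i$ has derivative with respect to $x_j$ equal to $f^U_j-f^U_i$, evaluated at the vertex. All other partial derivatives of that row vanish there, because they carry a factor $x_j=0$. For the creator equation $\dot\alpha=\alpha(1-\alpha)(f^C_C-f^C_D)$, the derivative with respect to $\alpha$ is $(1-2\alpha)(f^C_C-f^C_D)$, and its derivatives with respect to $x,y$ vanish. Hence the Jacobian is triangular in suitable coordinates (using $x,y$ as coordinates and the TFT frequency $1-x-y$ as the dependent variable). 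Its eigenvalues are the invasion fitness differences $f^U_j-f^U_i$ for the two non-resident user strategies, together with $f^C_C-f^C_D$ at $\alpha=0$ or $f^C_D-f^C_C$ at $\alpha=1$. I will verify this triangularity once, carefully, for the reduced coordinates, since the choice of which frequency is eliminated affects which rows vanish.

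With that in hand, the three cases are direct substitutions into the payoffs of Table~\ref{tab: Model I} and the creator fitnesses of Section~\ref{subsection:finitepopulation}.

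For $q_2$ ($\alpha=0$, $y=1$) the eigenvalues are as follows. Invasion by AllA gives $\mu b_{\textrm{u}}$, and invasion by TFT gives $\mu b_{\textrm{u}}/r-\epsilon$. For the creators, $f^C_C-f^C_D=-c-0=-c$. With $c,\epsilon>0$, all three are negative exactly when $\mu<0$, since then $\mu b_{\textrm{u}}/r-\epsilon<0$ automatically.

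For $q_3$ ($\alpha=0$, $x=1$) the eigenvalues are $-\mu b_{\textrm{u}}$ (AllN invading), $\mu b_{\textrm{u}}/r-\epsilon-\mu b_{\textrm{u}}=-\mu b_{\textrm{u}}(1-1/r)-\epsilon$ (TFT invading), and $(b_c-c)-(b_c-v)=v-c$ (creators). The middle one is negative whenever $\mu>0$ and $r>1$, so stability holds iff $\mu>0$ and $v-c<0$.

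For $q_6$ ($\alpha=1$, $x=1$) the eigenvalues are $-b_{\textrm{u}}$, $-\epsilon$, and $-(v-c)=c-v$. Hence stability holds iff $c<v$.

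The main obstacle is not the computation but the rigour of the "if and only if". In the borderline cases ($\mu=0$, $c=v$) an eigenvalue vanishes, the linearisation is inconclusive, and a direct argument is needed. For example, at $c=v$ the creator equation near $q_6$ or $q_3$ becomes higher order in the user perturbation. I would handle this by interpreting "stable" as linear (hyperbolic) stability, consistent with Lemma~\ref{st: lemma about stability}, and noting explicitly that strict inequalities are required. Throughout, I will assume $b_{\textrm{u}},c,\epsilon>0$ and $r>1$.
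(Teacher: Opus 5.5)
Your proposal is correct and follows the paper's route: linearise the reduced three-strategy system at the vertices and read stability off the signs of the Jacobian eigenvalues. The values you obtain ($-c$, $\mu b_{\textrm{u}}$, $\mu b_{\textrm{u}}/r-\epsilon$ at $q_2$; $v-c$, $-\mu b_{\textrm{u}}$, $\mu b_{\textrm{u}}(1/r-1)-\epsilon$ at $q_3$; $-b_{\textrm{u}}$, $-\epsilon$, $c-v$ at $q_6$) coincide with Table~\ref{tab:eigenvalues for replicator three strategies}, your triangular-Jacobian (invasion-fitness) argument explains where those entries come from, and your reading of ``stable'' as linear stability for the borderline cases $\mu=0$, $c=v$ matches the paper's implicit convention.
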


One can easily see that these results are repeating the ones for five strategies almost verbatim; hence we present them in the appendix.
\begin{figure}
    \includegraphics[scale=0.65]{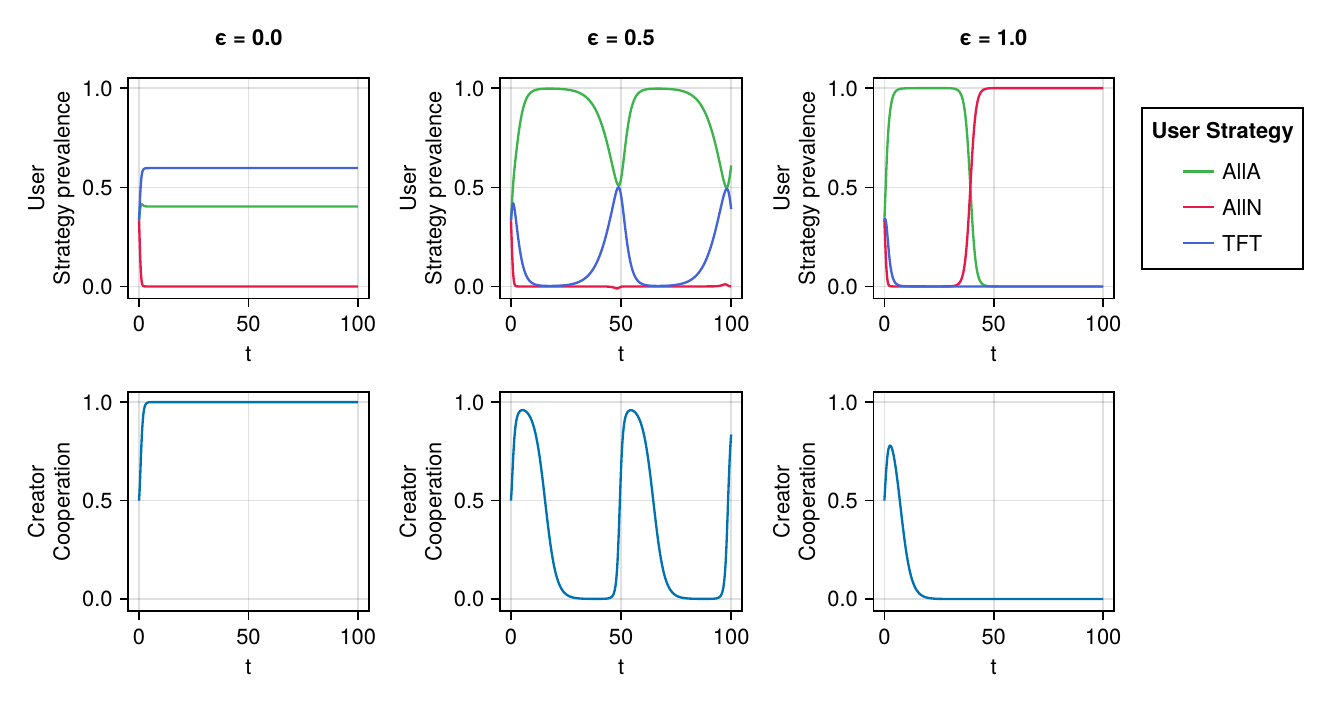}
    \caption{Numerical modelling of user (top row) and creator (bottom row) cooperation rates for $p_T = 1/4, p_D = 1/4$, $\theta_T = 3$, $\theta_D = 3$, $b_{\mathrm{u}} = 4$, $b_{\mathrm{d}} = 4$, $r =10, \mu = -2/10$, $v = 1/10, c =1/2$. The initial conditions were equal distribution of all strategies among the users and the creators.}
    \label{fig:numerics for three strategies replicator}
\end{figure}
\paragraph{Numerical modelling for three-strategies replicator dynamics.} Figure \ref{fig:numerics for three strategies replicator} depicts the numerics for plotting of the graphs of three strategies and the percentage of cooperators among creators that evolve according to the laws of replicator dynamics. The initial distribution is even. 

One can immediately see the similarities in the behaviour to that with five cases; though the former, of course, has a richer diversity of behaviour. When it costs nothing to check the creators' behaviour, the creators converge to total cooperation; TFT is the predominant strategy for users. With the increase of $\epsilon$, both graphs become oscillating; when $\epsilon=1$, the total distrust becomes the dominating strategy for the users, while cooperation in creators drops to 0. 
\begin{table}
    \caption{Eigenvalues for replicator dynamics where user has three strategies. }
     \begin{tabular}{c|c|c|c}
     \label{tab:eigenvalues for replicator three strategies}
     Eigenvalues& $\lambda_1$&$\lambda_2$&$\lambda_3$\\
     \hline
    $q_1$& $-\frac{i \sqrt{c} \sqrt{b_{\textrm{u}}-\epsilon } \sqrt{b_{\textrm{c}} (r-1)-c r+v} \sqrt{b_{\textrm{u}} \mu -r \epsilon }}{\sqrt{b_{\textrm{u}}} \sqrt{r-\mu } \sqrt{b_{\textrm{c}} (r-1)+v}}$&$\frac{i \sqrt{c} \sqrt{b_{\textrm{u}}-\epsilon } \sqrt{b_{\textrm{c}} (r-1)-c r+v} \sqrt{b_{\textrm{u}} \mu -r \epsilon }}{\sqrt{b_{\textrm{u}}} \sqrt{r-\mu } \sqrt{b_{\textrm{c}} (r-1)+v}}$&$\frac{r (b_{\textrm{u}} \mu -\mu  \epsilon +\epsilon )-b_{\textrm{u}} \mu }{r-\mu }$\\
    \hline
    $q_2$&$-c$&$b_{\textrm{u}} \mu$&$ \frac{b_{\textrm{u}} \mu }{r}-\epsilon$\\
    \hline
    $q_3$&$v-c$&$-b_{\textrm{u}} \mu $&$b_{\textrm{u}} \mu  \left(\frac{1}{r}-1\right)-\epsilon$\\
    \hline
    $q_4$& $\frac{b_{\textrm{c}} (r-1)-c r+v}{r}$&$\epsilon -\frac{b_{\textrm{u}} \mu }{r}$&$\frac{b_{\textrm{u}} \mu  (r-1)}{r}+\epsilon$\\
    \hline
    $q_5$&$b_{\textrm{u}}$&$c$&$b_{\textrm{u}}-\epsilon$\\
    \hline  
    $q_6$&$-b_{\textrm{u}}$&$c-v$&$-\epsilon$\\
    \hline
    $q_7$&$ \frac{b_{\textrm{c}} (-r)+b_{\textrm{c}}+c r-v}{r}$&$\epsilon -b_{\textrm{u}}$&$\epsilon$\\
    \hline
    $q_8$&$-\frac{i \sqrt{b_{\textrm{u}}} \sqrt{c} \sqrt{c-v}}{\sqrt{\frac{\mu -1}{\mu }} \sqrt{v}}$&$\frac{i \sqrt{b_{\textrm{u}}} \sqrt{c} \sqrt{c-v}}{\sqrt{\frac{\mu -1}{\mu }} \sqrt{v}}$&$\frac{r (b_{\textrm{u}} \mu -\mu  \epsilon +\epsilon )-b_{\textrm{u}} \mu }{(\mu -1) r}$
    \end{tabular}

    \label{tab:three strategies eigenvalues}
\end{table}
\section{Additional details for finite population analysis}
\begin{figure}[tb]
    \centering
    \includegraphics[scale=0.65]{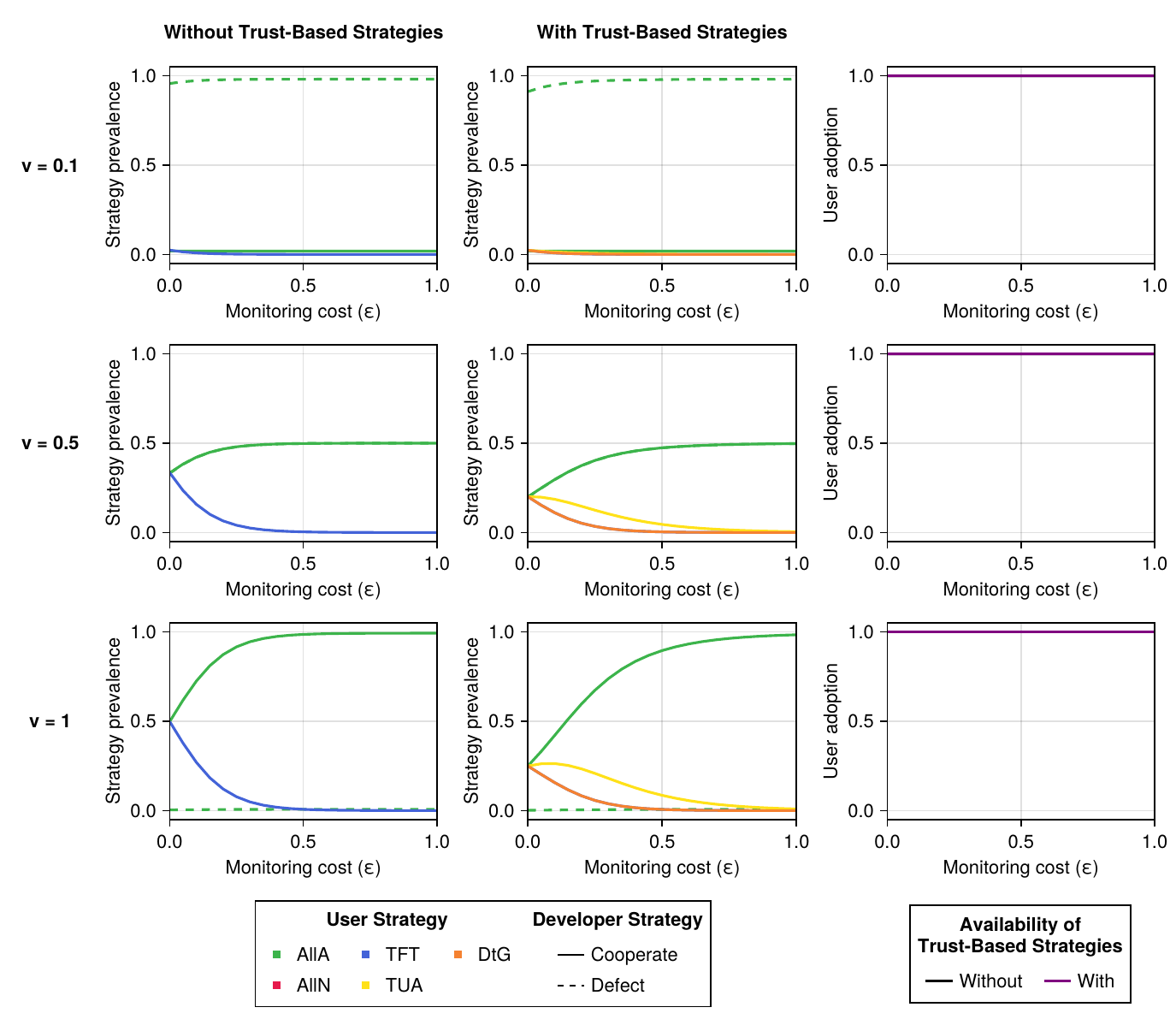}
    \caption{\textbf{Strong Punishment enforces cooperation of creator when users benefits from adopting defective creators.} 
    The first and second columns show the stationary distributions of each state as a function of monitoring cost for scenarios without and with trust-based strategies, respectively. The third column displays the difference in user adoption levels between these two cases across varying monitoring costs. Rows from top to bottom correspond to increasing levels of institutional punishment ($v=0.1$, $0.5$, and $1$). Parameters are set to $b_u=b_c=4$, $\beta=0.1$, $Z_u=Z_c=100$, $c=0.5$, $\mu=0.2$, $r=10$, $\theta_t=\theta_D=3$, and $p_T=p_D=0.25$.}
    \label{fig:finite3}
\end{figure}

\begin{figure}[tb]
    \centering
    \includegraphics[scale=0.65]{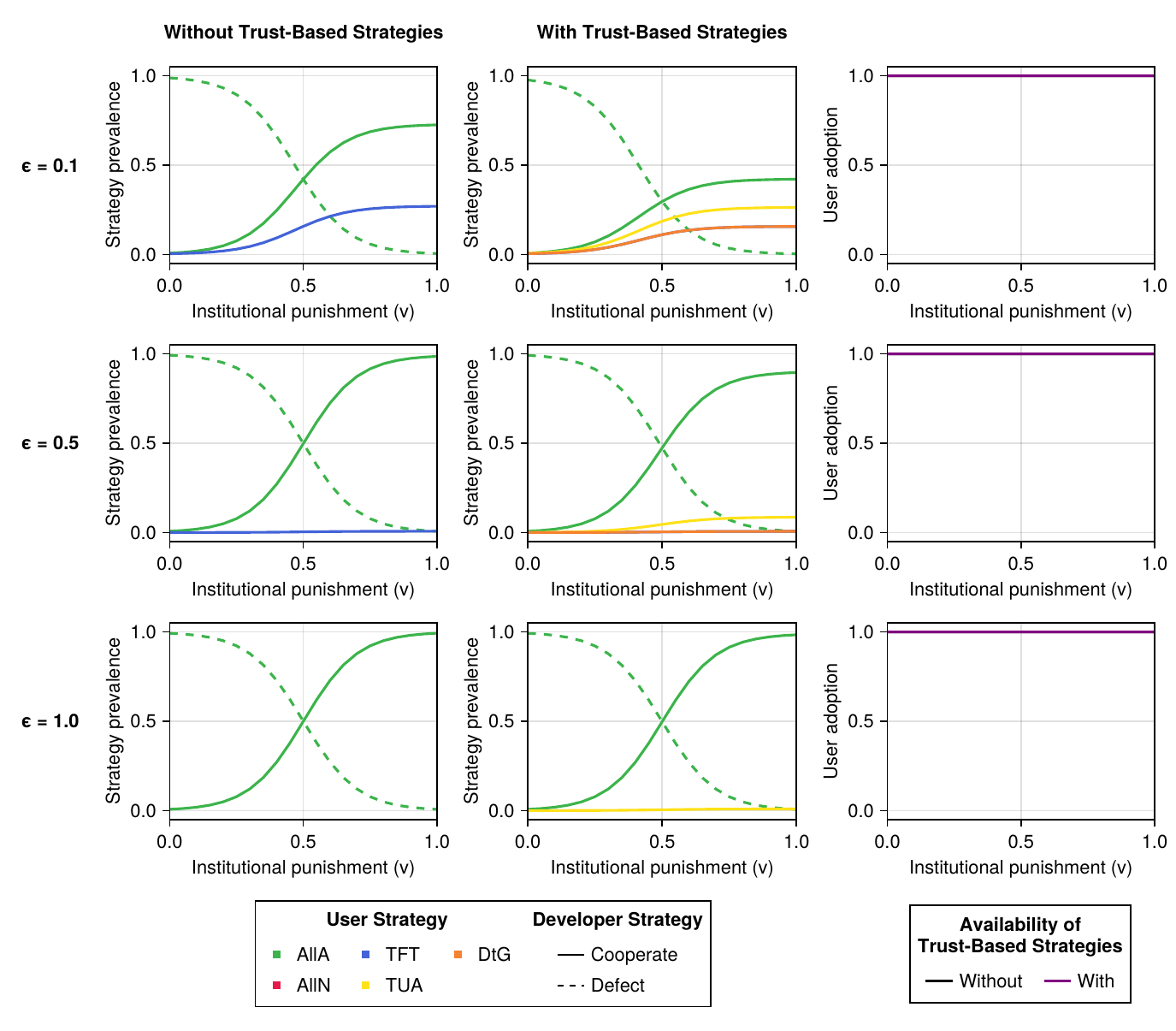}
    \caption{\textbf{User adoption dominates even without trust-based strategies, while creator cooperation relies on strong punishment.} 
    The first and second columns show the stationary distributions of each state as a function of institutional punishment for scenarios without and with trust-based strategies, respectively. The third column displays the difference in user adoption levels between these two cases across varying institutional punishment. Rows from top to bottom correspond to increasing levels of monitoring cost ($\epsilon=0.1$, $0.5$, and $1$). Parameters are set to $b_u=b_c=4$, $\beta=0.1$, $Z_u=Z_c=100$, $c=0.5$, $\mu=0.2$, $r=10$, $\theta_t=\theta_D=3$, and $p_T=p_D=0.25$.}
    \label{fig:finite3}
\end{figure}

\clearpage

\bibliographystyle{IEEEtran}
\bibliography{refs} 
\end{document}